\newtheorem*{definition}{Definition}
\newtheorem*{example}{Example} 
\newtheorem{theorem}{Theorem}
\newtheorem*{corollary}{Corollary}
\newtheorem*{lemma}{Lemma}
\newtheorem{assumption}{Assumption}
\DeclareMathOperator{\spn}{span}
\DeclareMathOperator*{\argmax}{arg\,max}
\DeclareMathOperator{\rank}{rank}
\newcolumntype{?}[1]{!{\vrule width #1}}
\newcommand*\samethanks[1][\value{footnote}]{\footnotemark[#1]}
\DeclarePairedDelimiter\floor{\lfloor}{\rfloor}
\title{Memorization in Overparameterized Autoencoders}
\author{Adityanarayanan Radhakrishnan \thanks{Laboratory for Information \& Decision Systems, and 
 Institute for Data, Systems, and Society, 
 Massachusetts Institute of Technology}
\and Karren Yang \samethanks 
\and Mikhail Belkin \thanks{Department of Computer Science and Engineering, The Ohio State University}
\and Caroline Uhler \samethanks[1]}
\begin{document}
\maketitle 

\begin{abstract}

The ability of deep neural networks to generalize well in the overparameterized regime has become a subject of significant research interest. We show that overparameterized autoencoders exhibit \emph{memorization}, a form of inductive bias that constrains the functions learned through the optimization process to concentrate around the training examples, although the network could in principle represent a much larger function class. In particular, we prove that single-layer fully-connected autoencoders project data onto the (nonlinear) span of the training examples. In addition, we show that deep fully-connected autoencoders learn a map that is \emph{locally contractive} at the training examples, and hence iterating the autoencoder results in convergence to the training examples. Finally, we prove that depth is necessary and provide empirical evidence that it is also sufficient for memorization in convolutional autoencoders. Understanding this inductive bias may shed light on the generalization properties of overparametrized deep neural networks that are currently unexplained by classical statistical theory.  
\end{abstract}

\section{Introduction}

In many practical applications, deep neural networks are trained to achieve near zero training loss, to {\it interpolate} the data. Yet, these networks still show excellent performance on the test data~\cite{RethinkingGeneralization}, a generalization phenomenon which we are only starting to understand~\cite{DoubleDescent}. Indeed, while infinitely many potential interpolating solutions exist, most of them cannot be expected to generalize at all. Thus to understand generalization we need to understand the {\it inductive bias} of these methods, i.e.~the properties of the solution learned by the training procedure. These properties have been explored in a number of recent works including~\cite{brutzkus2017sgd,ImplicitSelfRegularization,neyshabur2014search,savarese2019infinite,soudry2018implicit}.

In this paper, we investigate the inductive bias of overparameterized \emph{autoencoders}~\cite{goodfellow2016deep}, i.e.~maps $f:\mathbb{R}^d \rightarrow \mathbb{R}^d$ that satisfy $f(x^{(i)}) = x^{(i)}$ for $1 \leq i \leq n$ and are obtained by optimizing
\begin{equation*}
\arg\min_{f \in \mathcal{F}} \;\sum\nolimits_{i=1}^n \|f(x^{(i)}) - x^{(i)}\|^2
\end{equation*}
using gradient descent, where $\mathcal{F}$ denotes the network function space. Studying inductive bias in the context of autoencoders is relevant since (1) components of convolutional autoencoders are building blocks of many CNNs;  (2) layerwise pre-training using autoencoders is a standard technique to initialize individual layers of CNNs to improve training~\cite{LayerwiseImagenet, PretrainingLayerwise, UnsupervisedPretraining}; and (3) autoencoder architectures are used in many image-to-image tasks such as image segmentation or impainting~\cite{DeepImagePrior}. Furthermore, the inductive bias that we characterize in autoencoders may apply to more general~architectures.  

While there are many solutions that can interpolate the training examples to achieve zero training loss (see Figure \ref{fig:SchematicAutoencoders}a), in this paper we show that the solutions learned by gradient descent exhibit the following inductive bias:
\vspace{-0.1cm}
\begin{enumerate}
    \item For a single layer fully connected autoencoder, the learned solution maps any input to the ``nonlinear'' span (see Definition~\ref{def:Nonlinear Span}) of the training examples (see Section~\ref{sec:SingleFCMemorization}).
    \item For a multi-layer fully connected autoencoder, the learned solution is \emph{locally contractive} at the training examples, and hence iterating the autoencoder for any input results in convergence to a training example (see Figure \ref{fig:SchematicAutoencoders}a, top right and bottom left, and Section~\ref{sec:ContractiveMaps}). Larger networks result in faster contraction to training examples (Figure \ref{fig:SchematicAutoencoders}a, bottom right).
    \item Subject to sufficient depth, convolutional neural networks exhibit the same properties as fully connected networks (see Figure~\ref{fig:SchematicAutoencoders}b and Section~\ref{sec:ConvolutionalAutoencoders}).
\end{enumerate}

Taken together, our results indicate that overparameterized autoencoders exhibit a form of data-dependent self-regularization that encourages solutions that concentrate around the training examples. {\bf We refer to this form of inductive bias as \emph{memorization}, since it enables recovering training examples from the network.} Furthermore, we show that memorization is robust to early stopping and in general does not imply overfitting, i.e.~the learned solution can be arbitrarily close to the identity function while still being locally contractive at the training examples (see Section~\ref{sec:RobustnessOfMemorization}).



 \begin{figure}[!t]
   \begin{subfigure}[t]{0.4\textwidth}
        \centering
        \includegraphics[scale=0.25]{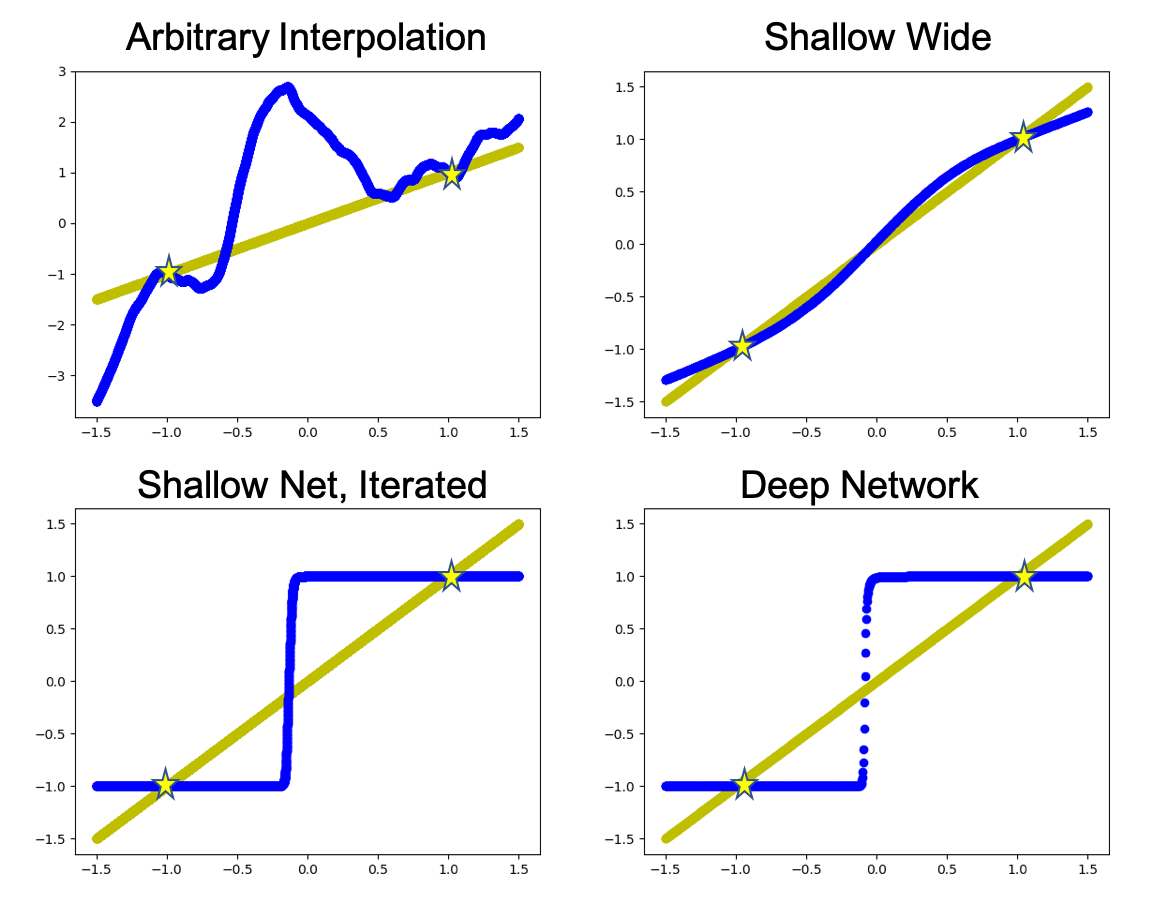}
        \caption{}
        \label{fig:SchematicAutoencoders}
 \end{subfigure}
 \hspace{0.5cm}
 \begin{subfigure}[t]{0.4\textwidth}
        \centering
        \includegraphics[scale=0.4]{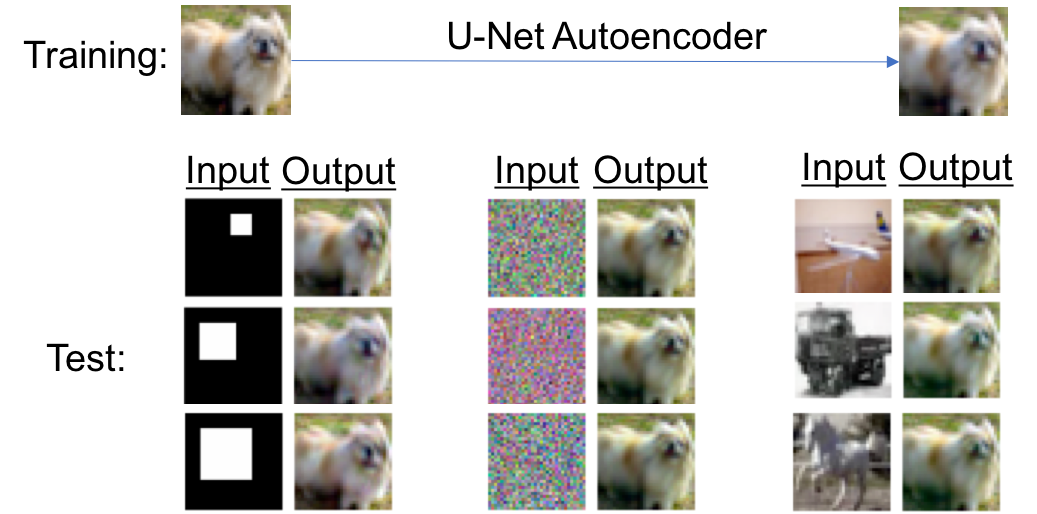}
        \caption{}
 \end{subfigure}
   \caption{(a) (Top left) Interpolating solution of an overparameterized autoencoder (blue line) that perfectly fits the training data (stars). Such an arbitrary solution is possible, but the inductive bias of autoencoders results in the following solutions instead. (Top right) The solution learned by a shallow overparameterized autoencoder is contractive towards the training examples, which is illustrated by iteration (bottom left). (Bottom right) The solution learned by a deep overparameterized autoencoder is strongly attractive towards training examples. (b) U-Net Autoencoder trained on a single image from CIFAR10 for 2000 iterations.  When fed random sized white squares, standard Gaussian noise, or new images from CIFAR10 the function contracts to the training image. }
       \label{fig:SchematicAutoencoders}
\end{figure}

Consistent with our findings, \cite{IdentityCrisis} studied memorization in autoencoders trained on a single example, and \cite{CloserLookAtMemorization} showed that deep neural networks are biased towards learning simpler patterns as exhibited by a qualitative difference in learning real versus random data.  Similarly, \cite{ImplicitSelfRegularization} showed that weight matrices of fully-connected neural networks exhibit signs of regularization in the training process.  

\section{Memorization in Single Layer Fully Connected Autoencoders}
\label{sec:SingleFCMemorization}

\textbf{Linear setting.} As a starting point, we consider the inductive bias of linear single layer fully connected autoencoders. This autoencoding problem can be reduced to linear regression (see Supplementary Material~A), and it is well-known that solving overparametrized linear regression by gradient descent initialized at zero converges to the minimum norm solution (see, e.g., Theorem 6.1 in \cite{LinearRegressionMinimumNorm}).  The minimum norm solution for the autoencoding problem corresponds to projection onto the span of the training data. Hence after training, linear single layer fully connected autoencoders map any input to points in the span of the training set, i.e., they memorize the training images. 

\textbf{ Nonlinear Setting.}  We now prove that this property of projection onto the span of the training data extends to nonlinear single layer fully connected autoencoders.  %
After training, such autoencoders satisfy $\phi(A x^{(i)})= x^{(i)}$ for $1 \leq i \leq n$, where $A$ is the weight matrix and $\phi$ is a given non-linear activation function (e.g.~sigmoid) that acts element-wise with $x_{j}^{(i)} \in \text{range}(\phi)$, where $x_{j}^{(i)}$ denotes the $j^{th}$ element of $x^{(i)}$ for $1 \leq i \leq n$ and $1 \leq j \leq d$. In the following, we provide a closed form solution for the matrix $A$ when initialized at $A^{(0)} = \mathbf{0}$ and computed using gradient descent on the mean squared error loss, i.e.
\begin{equation}
\label{eq_ext}
    \min_{A\in\mathbb{R}^{d\times d}}\; \frac{1}{2}\sum\nolimits_{i=1}^{n}(x^{(i)} - \phi(Ax^{(i)}))^T(x^{(i)} - \phi(Ax^{(i)})).
\end{equation}
Let $\phi^{-1}(y)$ be the pre-image of $y\in\mathbb{R}$ of minimum $\ell_2$ norm and for each $1\leq j\leq d$ let 
\begin{align*}
    x_{j}' = \argmax_{1 \leq i  \leq n} |\phi^{-1}(x_j^{(i)})|. 
\end{align*}
We will show that $A$ can be derived in closed form in the nonlinear overparameterized setting under the following three mild assumptions that are often satisfied in practice.

\begin{assumption}
\label{ass}
For all $j \in \{1,2, \ldots, d\}$ it holds that

\vspace{-0.1cm}
\quad\!\! (a) \quad $0 < x_{j}^{(i)} < 1$ \;for all $1 \leq i \leq n$;

\vspace{-0.1cm}
\quad\!\! (b) \quad $x_{j}^{(i)} < \phi(0)$ (or $x_{j}^{(i)} > \phi(0)$) \;for all $1 \leq i \leq n$;

\quad\!\! (c) \quad $\phi$ satisfies one of the following conditions:
\begin{enumerate}
    \item[(1)] if $\phi^{-1}(x_{j}') > 0$ then $\phi$ is strictly convex \& monotonically decreasing on $[0, \phi^{-1}(x_{j}')]$
    \item[(2)] if $\phi^{-1}(x_{j}') > 0$, then $\phi$ is strictly concave \& monotonically increasing on $[0, \phi^{-1}(x_{j}')]$
    \item[(3)] if $\phi^{-1}(x_{j}') < 0$, then $\phi$ is strictly convex \& monotonically increasing on $[\phi^{-1}(x_{j}'), 0]$
    \item[(4)] if $\phi^{-1}(x_{j}') < 0$, then $\phi$ is strictly concave \& monotonically decreasing on $[\phi^{-1}(x_{j}'), 0]$
\end{enumerate}
\end{assumption}
Assumption (a) typically holds for un-normalized images. Assumption (b) is satisfied for example when using a min-max scaling of the images. Assumption (c) holds for many nonlinearities used in practice including the sigmoid and tanh functions. 

To show memorization in overparametrized  nonlinear single
layer fully connected autoencoders, we first show how to reduce the non-linear setting to the linear setting.   

\begin{theorem} 
\label{thm:NonlinearFCAutoencoderMemorization}
Let $n<d$ (overparametrized setting). Under Assumption~\ref{ass}, solving (\ref{eq_ext}) to achieve  $\phi(A x^{(i)}) \approx x^{(i)}$ using  a variant of gradient descent (with an adaptive learning rate as described in Supplementary Material~B) initialized at $A^{(0)} = \mathbf{0}$ converges to a solution $A^{(\infty)}$ that satisfies the linear system $A^{(\infty)}x^{(i)} = \phi^{-1}(x^{(i)})$ for all $1 \leq i \leq n$.  
 \end{theorem}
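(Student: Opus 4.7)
The plan is to decompose the problem row by row and then reduce each row's nonlinear dynamics to a linear regression problem on the preactivations. Writing $a_j^T$ for the $j$-th row of $A$, the loss in (\ref{eq_ext}) splits as $L(A) = \tfrac{1}{2}\sum_j \sum_i (x_j^{(i)} - \phi(a_j^T x^{(i)}))^2$, and the gradient descent updates decouple across $j$. The gradient with respect to $a_j$ is
\[
\nabla_{a_j} L = \sum_{i=1}^{n} \phi'(a_j^T x^{(i)})\bigl(\phi(a_j^T x^{(i)}) - x_j^{(i)}\bigr)\, x^{(i)},
\]
so every update adds a vector in $\spn\{x^{(1)},\ldots,x^{(n)}\}$. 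Starting from $a_j^{(0)}=0$, this invariant is preserved, giving $a_j^{(t)} \in \spn\{x^{(1)},\ldots,x^{(n)}\}$ at every iteration. Hence if the $a_j^{(t)}$ converge, the limit $A^{(\infty)}$ automatically lies in the row span of the data, mirroring the linear case.

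Next I would show that for each row the preactivations $y_j^{(i,t)} := (a_j^{(t)})^T x^{(i)}$ stay inside the interval on which Assumption~1(c) holds, i.e.\ the interval between $0$ and $\phi^{-1}(x_j')$. At $t=0$ all preactivations equal $0$; Assumption~1(a,b) ensures that all targets $x_j^{(i)}$ lie on a common side of $\phi(0)$, so the signs of the residuals $\phi(y_j^{(i,t)})-x_j^{(i)}$ agree across $i$, and combined with the fact that $\phi'$ has constant sign on the monotone region, the gradient always points in a direction that moves every $y_j^{(i,t)}$ toward its target $\phi^{-1}(x_j^{(i)})$. The adaptive learning rate (Supplementary Material~B) then needs only to be chosen small enough at each step to prevent any single preactivation from overshooting past $\phi^{-1}(x_j')$, which is the extremal target; this is possible precisely because of the strict convexity (or concavity) and monotonicity of $\phi$ on that interval.

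Once the iterates are confined to the monotone region, $\phi$ is a bijection there and the loss becomes strictly convex in the preactivations. Standard convergence analysis of gradient descent with an adaptive step size then gives $\phi(y_j^{(i,t)}) \to x_j^{(i)}$ for every $i$, which by invertibility is equivalent to $y_j^{(i,t)} \to \phi^{-1}(x_j^{(i)})$. Repeating this for every row $j$ yields $A^{(\infty)} x^{(i)} = \phi^{-1}(x^{(i)})$ for $1 \leq i \leq n$, which is the claimed linear system. Because $n<d$, this system is underdetermined; but the row-span invariance from the first step pins down $A^{(\infty)}$ as the minimum-norm solution, directly reducing the nonlinear problem to the linear regression case recalled at the start of Section~\ref{sec:SingleFCMemorization}.

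The main obstacle I expect is step two: controlling all $n$ preactivations $y_j^{(i,t)}$ simultaneously so that none leaves the monotone region of $\phi$. Because a single update to $a_j$ couples all training points, a step that helpfully decreases the error at one $x^{(i)}$ could overshoot at another, and so the adaptive learning rate has to be specified in a way that responds to the worst-case overshoot across all $i$ at each iteration. The role of Assumption~1(b), that all $x_j^{(i)}$ lie on the same side of $\phi(0)$, should be exactly to align the gradient contributions of the different training points and make such a uniform step size feasible; turning this alignment into a quantitative monotonicity statement for the full trajectory is where the bulk of the technical work will lie.
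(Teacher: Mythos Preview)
Your opening moves---the row decomposition and the observation that every gradient update keeps $a_j^{(t)}$ in $\spn\{x^{(1)},\ldots,x^{(n)}\}$---agree with the paper. The divergence is in what the adaptive learning rate is for and how the preactivations are controlled. In the paper the adaptive rate is not a scalar step size tuned to avoid overshoot; it is a \emph{per-example} factor $\gamma_i^{(t)} \propto 1/\phi'(a_j^{(t)\,T} x^{(i)})$ whose purpose is to cancel the $\phi'$ term from the gradient, leaving an update $a_j^{(t+1)} = a_j^{(t)} + \sum_i \gamma_i x_j^{(i)}\bigl(\phi(a_j^{(t)\,T} x^{(i)}) - x_j^{(i)}\bigr)$. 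Assumption~1(c) is then used not to make the loss convex in $a_j$ but to trap $\phi$ below its secant through $(0,\phi(0))$ and $(\phi^{-1}(x_j^{(i)}), x_j^{(i)})$. Replacing $\phi$ by that secant produces an auxiliary sequence $B^{(t)}$ obeying exactly the linear-regression dynamics for $Bx^{(i)} = \phi^{-1}(x^{(i)})$; an induction (carrying along an extra ratio invariant on the entries of $B^{(t)}-A^{(t)}$) shows $b_j^{(t)} \ge a_j^{(t)}$ entrywise for all $t$. Since the $a_j^{(t)}$ are also shown to be entrywise monotone increasing from $0$, and $B^{(\infty)}$ is argued to be the \emph{least} entrywise upper bound (by contradiction, using minimality of the minimum-norm preimage), $A^{(\infty)} = B^{(\infty)}$ follows.

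The obstacle you single out is real, and your proposed fix---shrink the step so no preactivation leaves $[0,\phi^{-1}(x_j')]$---does not by itself close it: one $y_j^{(i,t)}$ can still cross its own target $\phi^{-1}(x_j^{(i)})$ while the others lag, flipping that residual's sign and breaking the alignment that your monotone-motion and convexity claims rest on. The paper never tries to prevent per-example overshoot; it only needs $a_j^{(t)\,T} x^{(i)} \in [0,\phi^{-1}(x_j')]$, and that comes for free from the entrywise sandwich $0 \le a_j^{(t)} \le b_j^{(t)}$ together with the positivity of the data in Assumption~1(a). The secant-line comparison and the resulting monotone sandwich are the missing ingredient in your outline; importing them (or finding a substitute that delivers the same entrywise bound) is what would turn your sketch into a proof.
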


The proof is presented in Supplementary Material~B.  Given our empirical observations using a constant learning rate, we suspect that the adaptive learning rate used for gradient descent in the proof is not necessary for the result to hold. 

As a consequence of Theorem \ref{thm:NonlinearFCAutoencoderMemorization}, the single layer nonlinear autoencoding problem can be reduced to a linear regression problem.  
This allows us to define a memorization property for nonlinear systems by introducing nonlinear analogs of an eigenvector and the span. 


\begin{definition}[$\phi$-eigenvector]
\label{def:PhiEigenvector}
Given a matrix $A \in \mathbb{R}^{d \times d}$ and element-wise nonlinearity $\phi$, a vector $u \in \mathbb{R}^d$ is a $\phi$-eigenvector of $A$ with $\phi$-eigenvalue $\lambda$ if $\phi(Au) = \lambda u$.
\end{definition}

\begin{definition}[$\phi$-span]
\label{def:Nonlinear Span}
Given a set of vectors $U = \{u_1, \ldots u_r\}$ with $u_i \in \mathbb{R}^{d}$  and an element-wise nonlinearity $\phi$, let $\phi^{-1}(U) = \{\phi^{-1}(u_1) \ldots \phi^{-1}(u_r)\}$. The nonlinear span of $U$ corresponding to $\phi$ (denoted $\phi$-$\spn(U)$) consists of all vectors $\phi(v)$ such that $v \in \spn(\phi^{-1}(U))$. 
\end{definition}

The following corollary characterizes memorization for nonlinear single layer fully connected autoencoders.  

\begin{corollary}[Memorization in non-linear single layer fully connected autoencoders]
Let $n<d$ (overparametrized setting) and let $A^{(\infty)}$ be the solution to (\ref{eq_ext}) using a variant of gradient descent with an adaptive learning rate initialized at $A^{(0)} = \mathbf{0}$. Then under Assumption~\ref{ass}, $\rank(A^{(\infty)})=\dim (\spn(X))$; in addition, the training examples $x^{(i)}$, $1 \leq i \leq n$, are $\phi$-eigenvectors of $A^{(\infty)}$ with eigenvalue $1$  and $\phi(A^{(\infty)}y) \in \phi$-$\spn(X)$ for any $y \in \mathbb{R}^{d}$. 
\end{corollary}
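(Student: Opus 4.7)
The plan is to use Theorem~\ref{thm:NonlinearFCAutoencoderMemorization} to reduce the statement to elementary linear algebra on the minimum-norm solution of a linear system. Let $X \in \mathbb{R}^{d \times n}$ be the matrix whose columns are the training examples $x^{(i)}$, and let $Y \in \mathbb{R}^{d \times n}$ be the matrix whose columns are $\phi^{-1}(x^{(i)})$. The theorem yields $A^{(\infty)} X = Y$, and the same argument used in the linear case at the start of Section~\ref{sec:SingleFCMemorization} (gradient descent on the squared loss initialized at $\mathbf{0}$ converges to the minimum Frobenius-norm solution of a consistent linear system) identifies $A^{(\infty)} = Y X^{+}$, where $X^{+}$ denotes the Moore-Penrose pseudoinverse.

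From this factorization, the $\phi$-eigenvector and $\phi$-$\spn$ claims follow immediately. Applying $\phi$ elementwise to $A^{(\infty)} x^{(i)} = \phi^{-1}(x^{(i)})$ gives $\phi(A^{(\infty)} x^{(i)}) = x^{(i)}$, so each training example is a $\phi$-eigenvector of $A^{(\infty)}$ with eigenvalue $1$. For any $y \in \mathbb{R}^{d}$, writing $A^{(\infty)} y = Y(X^{+} y)$ shows that $A^{(\infty)} y$ lies in the column span of $Y$, i.e. $A^{(\infty)} y \in \spn(\phi^{-1}(X))$, and applying $\phi$ yields $\phi(A^{(\infty)} y) \in \phi$-$\spn(X)$ by Definition~\ref{def:Nonlinear Span}.

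For the rank claim, the upper bound $\rank(A^{(\infty)}) \le \dim(\spn(X))$ follows from the standard minimum-norm characterization: decomposing any feasible $A$ row-wise into a part whose rows lie in $\spn(X)$ and a part whose rows lie in $\spn(X)^{\perp}$, the second piece contributes nothing to $AX$ but strictly inflates the Frobenius norm, so the minimizer $A^{(\infty)}$ has every row in $\spn(X)$. For the matching lower bound, $\rank(A^{(\infty)}) \ge \rank(A^{(\infty)} X) = \rank(Y)$, and a mild nondegeneracy condition on the data (guaranteed in the overparametrized regime $n < d$ with linearly independent training examples) ensures that the elementwise map $\phi^{-1}$ does not collapse the column structure, so $\rank(Y) = \rank(X) = \dim(\spn(X))$. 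The only step that is not pure bookkeeping is this rank lower bound: everything else is a direct consequence of the pseudoinverse formula, which is why I expect the explicit nondegeneracy hypothesis on $\Phi^{-1}(X)$ to be the only place that requires care in a careful write-up.
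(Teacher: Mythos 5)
Your proposal takes essentially the same route as the paper's proof: both reduce the problem via Theorem~\ref{thm:NonlinearFCAutoencoderMemorization} to the minimum-norm solution of the linear system $A x^{(i)} = \phi^{-1}(x^{(i)})$, obtain the $\phi$-eigenvector property from zero training error, deduce the span property from the fact that the image of the minimum-norm solution lies in $\spn(\phi^{-1}(X))$, and get the rank by combining an upper bound from the minimum-norm characterization with a lower bound from the image of the training examples. The only substantive difference is that you explicitly flag the nondegeneracy requirement $\rank(\phi^{-1}(X)) = \rank(X)$ needed for the rank lower bound, a point the paper's proof passes over silently.
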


\begin{proof}
Let $S$ denote the covariance matrix of the training examples and let $r:=\textrm{rank}(S)$. It then follows from Theorem \ref{thm:NonlinearFCAutoencoderMemorization} and the minimum norm solution of linear regression that $\textrm{rank}(A^{(\infty)})\leq r$. Since in the overparameterized setting, $A^{(\infty)}$ achieves $0$ training error, the training examples satisfy $\phi(A^{(\infty)}x^{(i)}) = x^{(i)}$ for all $1 \leq i \leq n$, which implies that the examples are $\phi$-eigenvectors with eigenvalue $1$.  Hence, it follows that $\textrm{rank}(A^{(\infty)})\geq r$ and thus $\textrm{rank}(A^{(\infty)})=r$.  Lastly, since the $\phi$-eigenvectors are the training examples, it follows that $\phi(A^{(\infty)}y) \in \phi$-$\spn(X)$ for any $y \in \mathbb{R}^d$.
\end{proof}

In the linear setting, memorization is given by the trained network projecting inputs onto the span of the training data; our result generalizes this notion of memorization to the nonlinear setting.

\begin{figure*}[!b]
\centering
\includegraphics[scale=0.3]{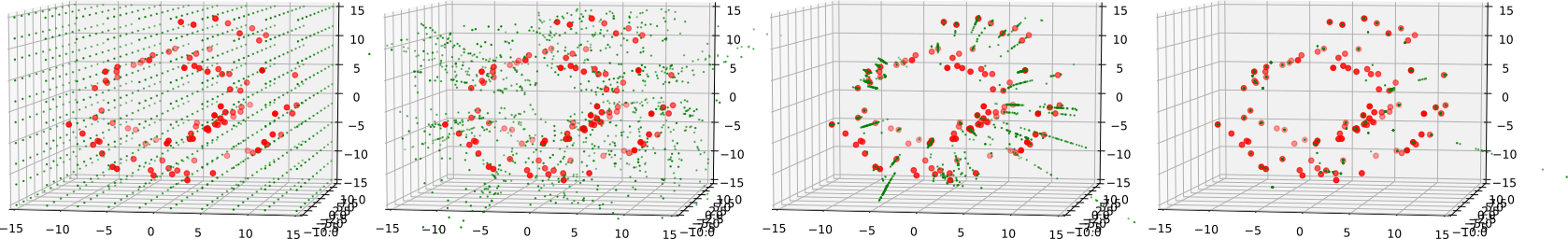}
\caption{A fully-connected autoencoder (10 layers, 512 hidden neurons) was trained to convergence on 100 samples (red points) from the 3D Swiss Roll dataset~\cite{scikit-learn}. The trajectories of 1000 uniformly spaced points (green points) were computed by iterating the autoencoder over the points. From left to right, the plots represent the results after 0, 11, 111, and 1111 iterations. Most of the trajectories converge to training examples, thereby demonstrating that the autoencoder is contractive towards training examples.}
\label{fig:SwissRoll}
\end{figure*}


\section{Memorization in Deep Autoencoders through Contractive Maps}
\label{sec:ContractiveMaps}

While single layer fully connected autoencoders memorize by learning solutions that produce outputs in the nonlinear span of the training data, we now demonstrate that deep autoencoders exhibit a stronger form of inductive bias by learning maps that are \emph{locally contractive} at training examples. To this end, we analyze autoencoders within the framework of discrete dynamical systems.

\subsection{Preliminaries: Discrete Dynamical Systems}
 A discrete dynamical system over a space $\mathcal{X}$ is defined by the relation: $x_{t+1} = f(x_{t})$ where $x_{t} \in \mathcal{X}$ is the state at time $t$ and $f: \mathcal{X} \rightarrow \mathcal{X}$ is a map describing the evolution of the state. Given an initial state $x_0 \in \mathcal{X}$, the trajectories of a discrete dynamical system can be computed by iterating the map~$f$, i.e.~$x_{t} = f^t(x_0)$. 

Fixed points occur where $x = f(x)$ and fall under two main characterizations: attractors and repellers. A fixed point $x$ is called an \emph{attractor} or \emph{stable fixed point} if trajectories that are sufficiently close converge to $x$. Conversely, $x$ is called a \emph{repeller} or \emph{unstable fixed point} if trajectories near $x$ move away. 
The following theorem provides sufficient first-order criteria for determining whether a fixed point is stable or unstable:

\begin{theorem} 
\label{thm:Attractor Characterization}
A point $x \in X$ is an attractor if the largest eigenvalue of the Jacobian, $\mathbf{J}$, of $f$ is strictly less than $1$. Conversely, $x$ is repeller if the largest eigenvalue of $\mathbf{J}$ is strictly greater than $1$.
\end{theorem}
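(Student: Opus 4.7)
The plan is to reduce to the linearization of $f$ at the fixed point and exploit the spectral properties of $\mathbf{J}$. Let $x^{*}$ be a fixed point, i.e.\ $f(x^{*})=x^{*}$, write $y_{t} = x_{t}-x^{*}$, and Taylor expand:
\begin{equation*}
    y_{t+1} \;=\; f(x^{*}+y_{t}) - x^{*} \;=\; \mathbf{J}\,y_{t} + r(y_{t}), \qquad \frac{\|r(y_{t})\|}{\|y_{t}\|} \to 0 \text{ as } y_{t}\to 0.
\end{equation*}
The dynamics of $y_{t}$ are therefore governed by $\mathbf{J}$ to leading order, and everything reduces to controlling iterates of a linear map plus a small remainder in a neighborhood of the origin.

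For the attractor direction, the plan is to use the classical fact (proved via the Jordan form or Gelfand's formula) that whenever the spectral radius $\rho(\mathbf{J})<1$, there exists an inner-product norm $\|\cdot\|_{*}$ equivalent to the Euclidean norm such that the induced operator norm satisfies $\|\mathbf{J}\|_{*} \le \rho(\mathbf{J}) + \varepsilon < 1$ for some $\varepsilon>0$. Fixing such a norm, I would pick $\delta>0$ small enough that the remainder satisfies $\|r(h)\|_{*} \le \varepsilon'\|h\|_{*}$ for all $\|h\|_{*}\le\delta$, with $c := \|\mathbf{J}\|_{*} + \varepsilon' < 1$. Then as long as $y_{t}$ stays in the $\delta$-ball,
\begin{equation*}
    \|y_{t+1}\|_{*} \;\le\; \|\mathbf{J}\|_{*}\,\|y_{t}\|_{*} + \|r(y_{t})\|_{*} \;\le\; c\,\|y_{t}\|_{*},
\end{equation*}
so the ball is forward-invariant and $\|y_{t}\|_{*}\le c^{t}\|y_{0}\|_{*}\to 0$, which gives $x_{t}\to x^{*}$ and proves attraction.

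For the repeller direction, the analogous argument is applied to the expanding spectrum of $\mathbf{J}$. Under the convention that ``largest eigenvalue greater than $1$'' means $\rho(\mathbf{J})>1$ with all eigenvalues of magnitude greater than $1$, the matrix $\mathbf{J}$ is invertible and $\rho(\mathbf{J}^{-1})<1$, so by the same norm construction there is an equivalent norm in which $\|\mathbf{J}^{-1}\|_{*}<1$, i.e.\ $\mathbf{J}$ is uniformly expanding: $\|\mathbf{J} h\|_{*} \ge \kappa\|h\|_{*}$ for some $\kappa>1$. Absorbing the remainder for small enough $\delta$ yields $\|y_{t+1}\|_{*}\ge\kappa'\|y_{t}\|_{*}$ with $\kappa'>1$ as long as $y_{t}$ remains in the $\delta$-ball, so any nonzero $y_{0}$ in this ball must eventually be ejected from it, proving instability. (If only some eigenvalues exceed $1$ in modulus, the same argument applied to the expanding invariant subspace still produces an escaping direction, which suffices for the weaker repeller definition of ``trajectories near $x^{*}$ move away.'')

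The main obstacle is the rigorous handling of the nonlinear remainder: the linear contraction/expansion rates must strictly dominate the $o(\|h\|)$ error, which is why passing to the adapted norm $\|\cdot\|_{*}$ rather than the Euclidean norm is essential (the Euclidean operator norm of $\mathbf{J}$ can exceed $1$ even when $\rho(\mathbf{J})<1$, e.g.\ for strongly non-normal $\mathbf{J}$ with large Jordan blocks). Once the adapted norm is in hand, the rest is a direct ball-shrinking (or ball-escaping) argument from the Banach fixed-point style estimate above.
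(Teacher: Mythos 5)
The paper does not actually prove this statement: it is quoted in Section 3.1 as a standard first-order stability criterion from the theory of discrete dynamical systems, with no argument supplied in the main text or the supplementary material. So there is no ``paper proof'' to compare against, and your proposal should be judged on its own. On that basis it is essentially the standard textbook argument and it is correct where it is carried out in full. The attractor direction is complete: interpreting ``largest eigenvalue'' as the spectral radius $\rho(\mathbf{J})$, passing to an adapted norm with $\|\mathbf{J}\|_{*}\le\rho(\mathbf{J})+\varepsilon<1$, and absorbing the $o(\|h\|)$ Taylor remainder into the contraction constant gives forward invariance of a small ball and geometric convergence. You are also right to flag that the adapted norm is the crux --- the Euclidean operator norm of a non-normal $\mathbf{J}$ can exceed $1$ even when $\rho(\mathbf{J})<1$, so the naive estimate fails without it.

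The one soft spot is the repeller direction in the case where only some eigenvalues have modulus greater than $1$. Your main argument assumes all eigenvalues exceed $1$ in modulus so that $\mathbf{J}^{-1}$ is a contraction, and the general case is relegated to a parenthetical claim that ``the same argument applied to the expanding invariant subspace still produces an escaping direction.'' That subspace is invariant under $\mathbf{J}$ but not under the nonlinear map $f$, so the expanding and non-expanding components couple through the remainder and the one-line reduction does not go through verbatim. Making it rigorous requires a cone-invariance argument (or an appeal to the unstable manifold theorem): split $\mathbb{R}^{d}=E^{u}\oplus E^{s}$ along the spectral gap at modulus $1$, show that a cone around $E^{u}$ is forward-invariant for small $\|y\|$, and that the $E^{u}$-component of any point in the cone grows by a factor $\kappa'>1$ per step. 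Since the statement you are proving explicitly only hypothesizes that the \emph{largest} eigenvalue exceeds $1$, this partially hyperbolic case is the generic one, so the gap is worth closing rather than leaving as an aside.
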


Intuitively, the constraint on the Jacobian means that attractors are fixed points at which the function $f$ is ``flatter".  As an example, consider the function shown in the top-right of Figure \ref{fig:SchematicAutoencoders}. Note that the derivative of the function is less than $1$ at the training examples. As a result, Theorem \ref{thm:Attractor Characterization} implies that the training examples are attractors: iterating the map over almost all points will result in convergence to one of the two training examples (Figure \ref{fig:SchematicAutoencoders}, bottom left).

\subsection{Deep Autoencoders are Locally Contractive at Training Examples}
We consider discrete dynamical systems in which the map $f$ is given by a deep fully-connected autoencoder. Since an autoencoder is trained to satisfy $f(x) = x$ for all training examples, it is clear that the training examples are fixed points of this system. We now show that when $f$ is an overparameterized autoencoder, the training examples are not only fixed points, but also attractors. To do this, we experimentally simulate trajectories from discrete dynamical systems in which the map $f$ is given by deep fully-connected autoencoder trained on the Swiss Roll dataset~\cite{scikit-learn} (see Figure~\ref{fig:SwissRoll}) and the MNIST dataset~\cite{mnist-lecun1998} (see Figure~\ref{fig:MNIST-contraction}). 

\begin{figure*}[!t]
\centering
\begin{subfigure}[t]{0.45\textwidth}
\centering
\includegraphics[scale=0.5]{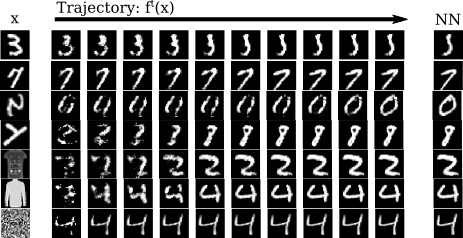}
\caption{}
\end{subfigure}
\begin{subfigure}[t]{0.45\textwidth}
\centering
\includegraphics[scale=0.5]{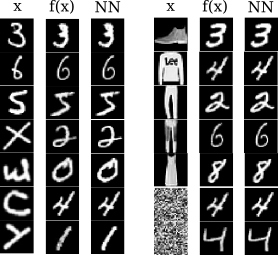}
\caption{}
\end{subfigure}
\caption{(a) A fully connected autoencoder (7 layers, 128 hidden neurons) was trained to convergence on 100 samples from the MNIST dataset. We show the trajectories of random test examples obtained by iterating the autoencoder over the images together with the nearest neighbor (NN) training image. (b) A fully connected autoencoder (14 layers, 128 hidden neurons) was trained to convergence on 20 samples from the MNIST dataset. Here the training examples are \emph{superattractors}: the autoencoder maps arbitrary input images directly to a training example.}
\label{fig:MNIST-contraction}
\end{figure*}

For the Swiss Roll dataset, we follow the trajectories of a grid of 1000 initial points. As shown in Figure~\ref{fig:SwissRoll}, trajectories from various points on the grid (shown as green points) converge to the training points (shown as red points). For the MNIST dataset, we trace the trajectories using test examples or other arbitrary images as initial points. As shown in Figure~\ref{fig:MNIST-contraction}a, the trajectories converge almost exclusively to the training examples. 

The observation that the training examples are attractors of the system implies, by Theorem \ref{thm:Attractor Characterization}, that the autoencoder $f$ is flatter or \emph{locally contractive} at the training examples. This demonstrates a form of inductive bias that we refer to as memorization: while the overparameterized autoencoders used in our experiments have the capacity to learn more complicated functions that interpolate between the training examples, gradient descent converges to a simpler solution that is contractive at the training examples. 

\begin{figure*}[!b]
\centering
\begin{subfigure}[t]{0.45\textwidth}
\includegraphics[scale=0.4]{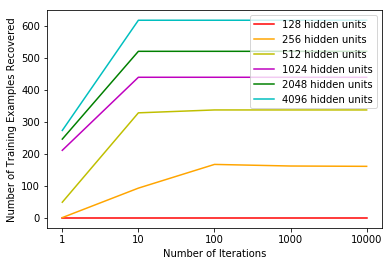}
\caption{Autoencoders of varying width.}
\end{subfigure}
\begin{subfigure}[t]{0.45\textwidth}
\includegraphics[scale=0.4]{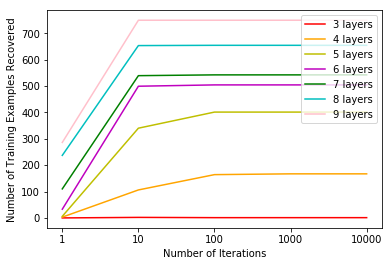}
\caption{Autoencoders of varying depth.}
\end{subfigure}
\caption{Autoencoders of varying width and depth were trained on 1000 samples from the MNIST dataset. X-axis: number of iterations. Y-axis: recovery probability of training examples.}
\label{fig:MNIST-wide}
\end{figure*}

\subsection{Contraction Depends on Network Architecture}

Next, we investigate the effect that width and depth of the autoencoder have on the contraction of the autoencoder towards the training examples. To quantify the extent to which the contraction around training examples occurs for different autoencoders, we propose to measure the \emph{recovery probability} $R_t$ of training examples as a function of number of iterations $t$. Specifically, we compute,
$$ R_t := \mathbb{P}_{x_0 \sim P_{\textrm{train}}}\{ \min_{\tilde{x} \in D_{\textrm{test}}}||f^t(\tilde{x}) - x_0|| < \epsilon\}, $$
where $P_{\textrm{train}}$ is the empirical distribution of training examples and $D_{\textrm{test}}$ is a set of initial states. This metric reflects both the basin of attraction of the training examples and their rate of convergence and is plotted for various MNIST autoencoders in Figure \ref{fig:MNIST-wide}.

Interestingly, we found that increasing width and depth both increase the recovery rate (Figure \ref{fig:MNIST-wide}). This is surprising because increasing the capacity of the autoencoder should increase the class of functions that can be learned, which should in turn result in more arbitrary interpolating solutions between the training examples. Yet the observation that the solutions are now \emph{more contractive} towards the training examples suggests that self-regularization is at play and increases with the parameterization of the network. In fact, for a sufficiently large network, the training examples become \emph{superattractors}, or attractors that contract neighboring points at a faster than geometric rate. This is demonstrated in Figure \ref{fig:MNIST-contraction}b, which shows that arbitrary inputs are mapped directly to a training image in a single iteration.

To understand why the contractive property of a deep autoencoder depends on width, we provide the following theoretical insight: for a 2-layer neural network with ReLU activations and a sufficiently large number of hidden neurons, training with gradient descent results in a function that is contractive towards the training data. 

\begin{theorem} Consider a 2-layer autoencoder $A$ represented by $f(x) = W_2\phi(W_{1} x + b)$ where $x \in \mathbb{R}^{d}, W_1 \in \mathbb{R}^{k \times d}, W_2 \in \mathbb{R}^{d \times k}, b \in \mathbb{R}^{k}$ and $\phi$ is the ReLU function.  If $A$ is trained on $X = \{x^{(1)} \ldots x^{(n)}\}$ and assuming, 
\begin{enumerate}
    \item[(a)] the weights $W_1$ are fixed,
    \item[(b)] $\phi(W_1x^{(i)} + b)$ are almost surely orthogonal due to nonlinearity for all $i \in \{1 \ldots n\}$,
    \item[(c)] $W_2^{(0)} = \mathbf{0}$, coordinates of $W_1, b $ are i.i.d. with zero mean and finite second moment.
\end{enumerate}
then as $k \rightarrow \infty$, all training examples are stable fixed points of $A$.  
\end{theorem}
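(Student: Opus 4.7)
The plan is to reduce training to a linear regression in $W_2$, derive a closed form using the orthogonality hypothesis, and then bound the spectrum of the resulting Jacobian in the $k\to\infty$ limit.

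Because $W_1$ and $b$ are frozen, the loss in~(\ref{eq_ext}) is a convex quadratic in $W_2$ with features $h^{(i)}:=\phi(W_1 x^{(i)}+b)$. Starting at $W_2^{(0)}=\mathbf{0}$, the gradient-descent iterates lie in the row span of the feature matrix $H=[h^{(1)},\ldots,h^{(n)}]$ and converge to the minimum Frobenius-norm interpolant. Orthogonality of the $h^{(i)}$ makes $H^\top H$ diagonal, so this solution admits the closed form
\[
W_2 \;=\; \sum_{i=1}^{n}\frac{x^{(i)}(h^{(i)})^\top}{\|h^{(i)}\|^2}.
\]

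Almost surely no pre-activation vanishes at $x^{(j)}$, so $f$ is locally affine in a neighborhood with Jacobian $J_j=W_2 D_j W_1$, where $D_j:=\mathrm{diag}(\mathbf{1}[(W_1 x^{(j)}+b)_m>0])$. Substituting the formula for $W_2$ shows that the image of $J_j$ is contained in $\mathrm{span}(x^{(1)},\ldots,x^{(n)})$, so the nonzero eigenvalues of $J_j$ coincide with those of the $n\times n$ matrix $M$ whose entries are $M_{il}=\langle h^{(i)},\,D_j W_1 x^{(l)}\rangle/\|h^{(i)}\|^2$. By Theorem~\ref{thm:Attractor Characterization}, establishing that $x^{(j)}$ is a stable fixed point reduces to proving $\rho(M)<1$ as $k\to\infty$.

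Each entry of $M$ and each $\|h^{(i)}\|^2$ is a sum of $k$ i.i.d.\ contributions from the rows of $(W_1,b)$, so by the strong law of large numbers (using the finite-second-moment assumption) $M$ converges almost surely to a deterministic matrix $M^\infty$ whose entries are single-row expectations in $(w,\beta)$. For $i=j$, the identity $D_j W_1 x^{(j)}=h^{(j)}-D_j b$ together with the ReLU identity $\phi(u)\,u=\phi(u)^2$ yields $M^\infty_{jj}=1-\mathbb{E}[\phi(\langle w,x^{(j)}\rangle+\beta)\,\beta]/\mathbb{E}[\phi(\langle w,x^{(j)}\rangle+\beta)^2]$; decomposing $\phi(u)=(u+|u|)/2$ and invoking the symmetry of the zero-mean i.i.d.\ coordinates forces $\mathbb{E}[\phi(\cdot)\beta]>0$, so $M^\infty_{jj}\in(0,1)$. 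The remaining entries are constrained by the orthogonality hypothesis $\mathbb{E}[\phi(\langle w,x^{(i)}\rangle+\beta)\phi(\langle w,x^{(l)}\rangle+\beta)]=0$ for $i\neq l$ (which passes to the limit via the law of large numbers applied to the finite-$k$ orthogonality), which via the factorization $\phi(u)=\mathbf{1}[u>0]\,u$ constrains the mixed three-point expectations; a Gershgorin-type bound then concludes $\rho(M^\infty)<1$.

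The main obstacle is translating the two-point orthogonality hypothesis into a tight enough bound on the off-diagonal three-point quantities $\mathbb{E}[\phi(\langle w,x^{(i)}\rangle+\beta)\,\mathbf{1}[\langle w,x^{(j)}\rangle+\beta>0]\,\langle w,x^{(l)}\rangle]$, whose activation mask at index $j$ differs from the linear factor indexed by $l$. Reconciling this mismatch requires careful use of the ReLU factorization together with distributional symmetry of the weights to cancel cross terms in the limit; this is the crux of the argument.
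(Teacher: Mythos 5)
Your setup is sound and matches the paper's up to the point where you form the $n\times n$ matrix $M$ with $M_{il}=\langle h^{(i)},D_jW_1x^{(l)}\rangle/\|h^{(i)}\|^2$; the closed form for $W_2$, the reduction of the spectrum of $J_j$ to that of $M$, and the computation of the diagonal entry $M_{jj}\to\|x^{(j)}\|^2/(\|x^{(j)}\|^2+1)$ all agree with the paper's argument (the paper works directly with the rank-one $d\times d$ Jacobian $\frac{x^{(j)}_m x^{(j)}_l}{\|x^{(j)}\|^2+1}$ and takes its trace, which is the same quantity). The gap is that you declare the off-diagonal entries of $M$ to be ``the crux of the argument'' and leave them unresolved, proposing a Gershgorin bound on three-point expectations in the $k\to\infty$ limit. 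That machinery is unnecessary: for ReLU, $h^{(i)}=\phi(W_1x^{(i)}+b)$ and $h^{(j)}$ are entrywise nonnegative, so hypothesis (b), $\langle h^{(i)},h^{(j)}\rangle=0$, forces their supports to be disjoint; since $D_j$ is supported exactly where $h^{(j)}>0$, one gets $(h^{(i)})^\top D_j=0$ for every $i\neq j$, hence $M_{il}=0$ identically for $i\neq j$ and all $l$, already at finite $k$. The matrix $M$ therefore has a single nonzero row and its only nonzero eigenvalue is $M_{jj}$, so the entire statement reduces to the diagonal computation you already carried out. This is precisely how the paper proceeds (the orthogonality kills every cross term in the Jacobian evaluated at $x^{(j)}$, and the strong law of large numbers is applied only to the surviving $i=j$ term).

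Two smaller points. First, your claim that $\mathbb{E}[\phi(\langle w,x^{(j)}\rangle+\beta)\,\beta]>0$ ``by symmetry of the zero-mean i.i.d.\ coordinates'' overreaches: assumption (c) gives zero mean and finite second moment, not a symmetric law, so the decomposition $\phi(u)=(u+|u|)/2$ only yields $\mathbb{E}[\phi(\cdot)\beta]=\tfrac12\mathbb{E}[\beta^2]+\tfrac12\mathbb{E}[|{\cdot}|\beta]$ with the second term uncontrolled in general; the paper instead asserts directly (by the same law-of-large-numbers heuristic it uses for the numerator) that $\|h^{(j)}\|^2/k\to rc(\|x^{(j)}\|^2+1)/k$ and the numerator tends to $rc\,x^{(j)}_l/k$, giving $M_{jj}\to\|x^{(j)}\|^2/(\|x^{(j)}\|^2+1)<1$ without needing a sign on $\mathbb{E}[\phi(\cdot)\beta]$ separately. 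Second, once the off-diagonals vanish exactly, no passage of the orthogonality hypothesis to a limiting expectation is needed, so the ``three-point expectation'' difficulty you flag simply does not arise.
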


The proof is presented in Supplementary Material~C.  Assumption (a) of fixing the weights of a layer is a technique that has also been used before  to study the convergence of overparameterized neural networks using gradient descent~\cite{brutzkus2017sgd,li2018learning}. These works showed that the inductive bias of neural networks optimized by gradient descent results in good generalization in the classification setting. We show here that training autoencoders by gradient descent results in an inductive bias towards functions that are contractive towards the training examples.





\begin{figure}[!b]
\centering
   \begin{subfigure}[t]{0.4\textwidth}
        \centering
        \includegraphics[scale=0.3]{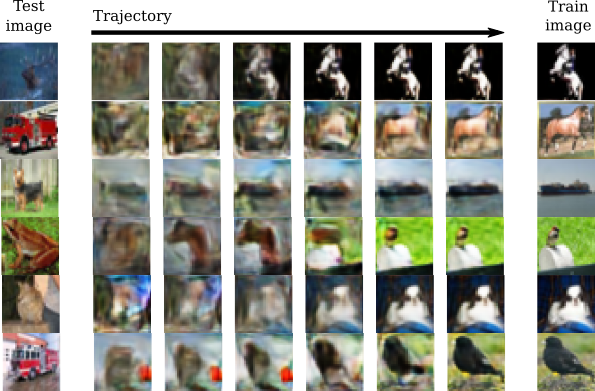}
        \caption{Training Examples are Attractors.}
        \label{fig:cifar_iterated}
 \end{subfigure}
 \hspace{0.5cm}
 \begin{subfigure}[t]{0.4\textwidth}
        \centering
        \includegraphics[scale=0.18]{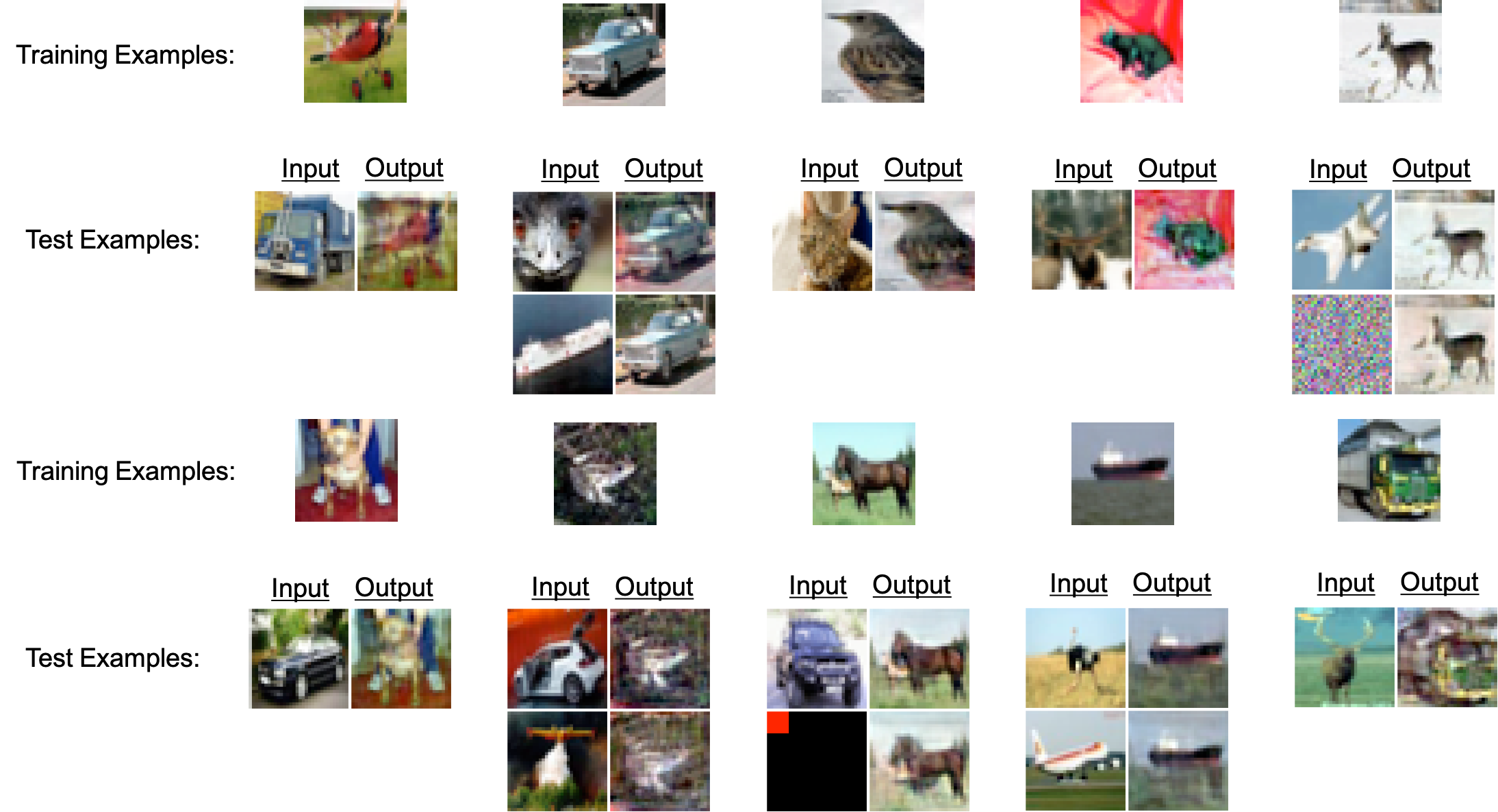}
        \caption{Training Examples are Superattractors.}
        \label{fig:cifar_strong_attractors}
 \end{subfigure}
\caption{(a) A convolutional autoencoder was trained to convergence on 100 samples from the CIFAR dataset. We show the trajectories of random test examples obtained by iterating the autoencoder over the images together with the nearest neighbor (NN) training image. (b) When trained on 10 examples, one from each class of CIFAR10, the nonlinear autoencoder maps arbitrary input images to training examples. The downsampling architecture used is from Figure 1a in Supplementary Material~G modified with leaky ReLU activations.}
\label{fig:ConvolutionIteration}
\end{figure}

\section{Memorization in Convolutional Autoencoders}
\label{sec:ConvolutionalAutoencoders}

Having characterized the inductive bias of fully connected autoencoders in the previous section, we now demonstrate that memorization is also present in convolutional autoencoders.  As an example, Figure~\ref{fig:ConvolutionIteration} shows two deep convolutional autoencoders that learn maps which are contractive to training examples.  The first autoencoder (Figure \ref{fig:cifar_iterated}) is trained on 100 images of CIFAR10 \cite{CIFAR10} and iterating the map yields individual training examples.  The second autoencoder (Figure \ref{fig:cifar_strong_attractors}) is trained on 10 images from CIFAR10, and after just 1 iteration, individual training examples are output (i.e. the training examples are superattractors).   

In the following we show that in contrast to fully connected autoencoders, depth is required for memorization in convolutional autoencoders. This is because the weight matrices in convolutional networks are subject to additional sparsity constraints as compared to fully connected networks (See Supplementary Material~E). The following theorem states that overparameterized shallow linear convolutional autoencoders learn a full-rank solution and thus do not memorize.

\begin{theorem}
\label{thm:SingleConvolution}
A single filter convolutional autoencoder with kernel size $k$ and $\frac{k-1}{2}$ zero padding trained on an image $x \in \mathbb{R}^{s \times s}$ using gradient descent on the mean squared error loss learns a rank $s^2$ solution.  
\end{theorem}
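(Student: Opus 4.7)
The plan is to reduce training of this shallow single-filter convolutional autoencoder to a convex linear regression in the $k^{2}$ filter weights, and then identify the gradient-descent minimizer with a filter whose induced convolution matrix is the identity (and hence of full rank $s^{2}$), in contrast to the rank-deficient behaviour exhibited by the single-layer fully-connected autoencoder analysed earlier in the paper.

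First, I would encode the 2D convolution with filter $w\in\mathbb{R}^{k\times k}$ and $(k-1)/2$ zero padding as a sparse doubly-block-Toeplitz matrix $M(w)\in\mathbb{R}^{s^{2}\times s^{2}}$ acting on $\text{vec}(x)$, so that the network output is $M(w)\,\text{vec}(x)$. Since this expression is linear in $w$ for fixed $x$, it can be rewritten as $\Phi_{x}w$, where $\Phi_{x}\in\mathbb{R}^{s^{2}\times k^{2}}$ is the patch matrix whose $(i,j)$-th row enumerates the (zero-padded) values of $x$ inside the $k\times k$ window centred at pixel $(i,j)$. The MSE objective becomes
\[
L(w)=\tfrac12\,\|\Phi_{x}w-\text{vec}(x)\|^{2},
\]
a convex quadratic in $w$, so gradient descent converges to a global minimizer.

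Second, I would observe that the delta filter $w^{\mathrm{id}}$ (a $1$ at the centre and $0$ elsewhere) satisfies $M(w^{\mathrm{id}})=I_{s^{2}}$ and therefore attains $L(w^{\mathrm{id}})=0$. Hence the minimum of $L$ is zero and every minimizer $w^{*}$ solves $\Phi_{x}w^{*}=\text{vec}(x)$. For any image $x$ whose $k^{2}$ zero-padded shifts are linearly independent in $\mathbb{R}^{s^{2}}$ (a condition that holds generically when $s>k$), the matrix $\Phi_{x}$ has full column rank $k^{2}$, making the minimizer unique. Consequently $w^{*}=w^{\mathrm{id}}$, $M(w^{*})=I_{s^{2}}$, and $\text{rank}(M(w^{*}))=s^{2}$.

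The main obstacle is the degenerate regime in which $\Phi_{x}$ is column-rank-deficient (e.g.\ constant or highly periodic $x$): then multiple zero-loss filters exist, and gradient descent from zero initialisation converges to the minimum-norm least-squares solution $w^{*}=\Phi_{x}^{+}\text{vec}(x)$, which need not equal $w^{\mathrm{id}}$. Finishing the proof in this case requires showing $M(w^{*})$ is still full rank. A natural route is a symbol/Fourier analysis: the minimum-norm filter inherits the translation symmetries of $x$, and its $k\times k$ discrete Fourier symbol has nonvanishing zero-frequency component whenever $x$ is nonzero, which combined with the banded doubly-block-Toeplitz structure of $M(w^{*})$ should force non-singularity. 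Handling the loss of circulant structure caused by zero padding is the most delicate step and likely requires either an explicit determinant recursion on the banded Toeplitz operator, or a continuity/perturbation argument reducing the degenerate case to the generic one.
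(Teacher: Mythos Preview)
Your reduction to a convex quadratic in the filter weights $w$ and identification of the delta filter $w^{\mathrm{id}}$ as the (generically unique) minimizer is correct and gives a clean, concrete proof in the generic case. It is, however, a genuinely different route from the paper's.

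The paper does not argue via the delta filter. Instead, it writes the convolution matrix as a stack of $s^{2}$ rows, each a cyclic shift of a single row vector $R$ (with the $k^{2}$ filter weights embedded at fixed positions and zeros elsewhere), acting on the vectorized zero-padded image. Training to autoencode the single image $x$ is then recast as a linear regression for the row $R$ with $s^{2}$ \emph{virtual} training pairs: $R\,x_{l:t}=x_{t+1}$, where $x_{l:t}$ denotes a left rotation of the padded image by $t$ positions. The paper then invokes the Appendix~A analysis (gradient descent from zero on linear regression yields a solution whose rank equals the rank of the data covariance) to conclude rank $s^{2}$. The conceptual takeaway is that weight sharing converts one image into $s^{2}$ effective constraints, which is why the convolutional autoencoder does not collapse to a rank-one projector the way the fully-connected one does.

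What each approach buys: your argument is more transparent about \emph{what} the learned filter is (the identity kernel, hence $M(w^{*})=I_{s^{2}}$), whereas the paper's argument is more structural and makes explicit why a single image behaves like $s^{2}$ examples. Your acknowledged gap for rank-deficient $\Phi_{x}$ is real, but the paper's proof has the same implicit gap: it needs the $s^{2}$ rotated copies of the padded image to be linearly independent, and the final step from ``$R$ solves $s^{2}$ constraints'' to ``the stacked shifted-row matrix has rank $s^{2}$'' is asserted rather than spelled out. So your proposal is not missing anything the paper actually provides; if anything, your delta-filter identification makes the full-rank conclusion more immediate in the generic case than the paper's invocation of Appendix~A.
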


The proof is presented in Supplementary Material ~D.  Next we provide a lower bound on the number of layers required for memorization in a linear convolutional autoencoder. The proof requires the following lemma, which states that a linear autoencoder (i.e. matrix) with just one forced zero cannot memorize arbitrary inputs.

\begin{lemma}
\label{lem:SingleForcedZero}
A single layer linear autoencoder, represented by a matrix $A \in \mathbb{R}^{d \times d}$ with a single forced zero entry cannot memorize arbitrary $v \in \mathbb{R}^d$.
\end{lemma}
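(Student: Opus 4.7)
The plan is to exhibit a concrete $v \in \mathbb{R}^d$ for which $Av \neq v$, thereby showing that $A$ fails to memorize every possible input. The key algebraic observation I would lead with is that the condition $Av = v$ for every $v \in \mathbb{R}^d$ is equivalent to $A = I$, since $(A - I)v = 0$ for all $v$ forces $A - I = 0$. So a single forced zero kills memorization as soon as it forbids $A$ from being the identity.

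Let the forced zero sit at position $(p, q)$, so $A_{pq} = 0$ is required structurally. The witness I would use is the standard basis vector $v = e_q$. A one-line computation gives
\[
(A e_q)_p \;=\; A_{pq} \;=\; 0,
\]
whereas $(e_q)_p = \delta_{pq}$. For $e_q$ to be a fixed point we would need $\delta_{pq} = 0$. When the forced zero is on the diagonal ($p = q$), this yields the immediate contradiction $0 = 1$, so $v = e_p$ is an explicit unmemorizable input regardless of how the remaining $d^2 - 1$ entries of $A$ are chosen.

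The main subtlety I anticipate is the off-diagonal case. If $p \neq q$, then $I_{pq} = 0$ already, so the identity itself satisfies the forced-zero constraint while memorizing every $v$; the witness $e_q$ alone is not enough. Consequently the lemma's conclusion must be understood as covering those forced zeros that genuinely prevent $A = I$, which for a single forced entry means a diagonal position. In the convolutional setting where the lemma is subsequently invoked to lower-bound depth, the accumulation of structural zeros across layers produces exactly this diagonal-blocking obstruction on the effective linear map, and the proof of the lemma itself then reduces to the one-line witness argument above. The hard part, then, is not the algebra but the framing: making precise which ``forced zero'' locations actually obstruct $A = I$, after which the standard basis vector argument closes the lemma immediately.
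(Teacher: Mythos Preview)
Your argument rests on reading ``memorize $v$'' as ``$Av = v$,'' so that ``cannot memorize arbitrary $v$'' becomes ``$A \neq I$.'' That is not the notion of memorization the paper uses. In Section~\ref{sec:SingleFCMemorization} the paper establishes that a linear single-layer autoencoder trained from zero converges to the orthogonal projection onto the span of the training data; for a single training vector $v$ this is $A = vv^{T}/\|v\|^{2}$. Memorization in the sense of this lemma means that the learned map equals that rank-one projector, not that $A$ is the identity. The paper's one-line proof is exactly this: the $(p,q)$ entry of the projector is $v_{p}v_{q}/\|v\|^{2}$, which is nonzero for a generic $v$ (any $v$ with all coordinates nonzero), so a matrix with a structurally forced zero at a fixed, data-independent position $(p,q)$ cannot equal this projector for such $v$.

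This matters because your own proposal already exposes the failure of the $A=I$ reading: you correctly note that an off-diagonal forced zero is perfectly compatible with $A=I$, and you then try to rescue the lemma by restricting it to diagonal forced zeros. Under the paper's intended definition no such restriction is needed---the lemma holds for \emph{any} forced zero position, diagonal or not, and this full strength is what Theorem~\ref{thm:DepthRequired} relies on (the forced zeros produced by shallow convolutional stacks are typically off-diagonal). So the gap is not in the algebra but in the premise: replace ``$Av=v$ for all $v$'' with ``$A$ equals the rank-one projector onto $v$,'' and the witness becomes any $v$ with $v_{p}v_{q}\neq 0$ rather than a standard basis vector.
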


The proof follows directly from the fact that in the linear setting, memorization corresponds to projection onto the training example and thus cannot have a zero in a fixed data-independent entry.  Since convolutional layers result in a sparse weight matrix with forced zeros, linear convolutional autoencoders with insufficient layers to eliminate zeros through matrix multiplications cannot memorize arbitrary inputs, regardless of the number of filters per layer. This is the key message of the following theorem.

\begin{theorem}
\label{thm:DepthRequired}
At least $s-1$ layers are required for memorization (regardless of the number of filters per layer) in a linear convolutional autoencoder with filters of kernel size $3$ applied to $s\times s$ images.  
\end{theorem}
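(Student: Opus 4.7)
The plan is to argue that the composition of convolutional layers with kernel size $3$ (and padding preserving the spatial size) has a forced sparsity pattern in its equivalent $s^2 \times s^2$ matrix representation, and that fewer than $s-1$ layers necessarily leaves at least one entry forced to zero. By Lemma \ref{lem:SingleForcedZero} this precludes memorization, regardless of the number of filters per layer.

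First, I would fix notation. Reshape an $s\times s$ input image as a vector in $\mathbb{R}^{s^2}$, index its entries by pixel coordinates $(r,c)\in\{1,\dots,s\}^2$. For a single channel, a $3\times 3$ convolutional layer with $1$-padding is equivalent to multiplication by a matrix $M\in\mathbb{R}^{s^2\times s^2}$ whose $((r_1,c_1),(r_2,c_2))$-entry is forced to $0$ whenever the Chebyshev distance $d((r_1,c_1),(r_2,c_2)) := \max(|r_1-r_2|,|c_1-c_2|)$ exceeds $1$. This is the key locality property of the convolutional operator, inherited directly from the $3\times 3$ kernel footprint.

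Second, I would show that this property composes. For two single-channel layers with matrices $M_1,M_2$, the $((r_1,c_1),(r_3,c_3))$-entry of $M_2 M_1$ is a sum over intermediate pixels $(r_2,c_2)$ of products $(M_2)_{(r_1,c_1),(r_2,c_2)}\cdot (M_1)_{(r_2,c_2),(r_3,c_3)}$. Each summand is forced to zero unless both $d((r_1,c_1),(r_2,c_2))\le 1$ and $d((r_2,c_2),(r_3,c_3))\le 1$, and by the triangle inequality for Chebyshev distance this forces $d((r_1,c_1),(r_3,c_3))\le 2$. By induction, a composition of $L$ single-channel layers is zero wherever the Chebyshev distance exceeds $L$. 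To handle multiple filters per layer, I would note that a layer mapping $f_{l-1}$ to $f_l$ channels is a block matrix in which each block is itself a convolution with a single $3\times 3$ kernel and therefore obeys the same locality; multiplying these block matrices, each entry of the end-to-end $s^2 \times s^2$ map is a sum over channel-pixel paths, and for any such path the triangle-inequality argument applies coordinate-wise. Thus after $L$ layers, the combined linear operator from input to output pixels has $((r_1,c_1),(r_2,c_2))$-entry forced to zero whenever $d((r_1,c_1),(r_2,c_2))>L$, regardless of the intermediate filter counts.

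Third, I would exhibit a forced zero when $L<s-1$. The opposite corners $(1,1)$ and $(s,s)$ have Chebyshev distance $s-1$, so for any $L\le s-2$ their entry in the composed matrix is identically zero, independent of the learned weights. Applying Lemma \ref{lem:SingleForcedZero}, such an autoencoder cannot memorize arbitrary inputs $v\in\mathbb{R}^{s^2}$, giving the lower bound $L\ge s-1$.

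I expect the main obstacle is the multi-filter step: making precise that \emph{even after summing over all possible channel paths} the forced zero persists, since an adversary might hope that cross-channel mixing could somehow route information faster than the receptive field suggests. The resolution is that summation cannot create a nonzero entry out of identically zero summands, and the Chebyshev-distance constraint bounds each summand individually along any channel-pixel path. Once this is spelled out cleanly, the remaining steps are essentially a triangle-inequality bookkeeping argument and a direct appeal to Lemma \ref{lem:SingleForcedZero}.
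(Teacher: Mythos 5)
Your proposal is correct and follows essentially the same route as the paper: the paper's argument (sketched in the sentence following the theorem, together with the forced-zero pattern exhibited in Supplementary Material~E and Lemma~\ref{lem:SingleForcedZero}) is exactly that the composed operator retains a forced zero until the receptive field spans the image, which requires $s-1$ layers. Your write-up merely makes the receptive-field bookkeeping (Chebyshev distance, triangle inequality, block structure for multiple filters) explicit, which the paper leaves informal.
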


Multiplication of $s-1$ such layers eliminates sparsity in the  resulting operator, which is a requirement for memorization due to Lemma \ref{lem:SingleForcedZero}.  Importantly, Theorem~\ref{thm:DepthRequired} shows that adding filters cannot make up for missing depth, i.e., overparameterization through depth rather than filters is necessary for memorization in convolutional autoencoders. The following corollary emphasizes this point.

\begin{corollary}
A 2-layer linear convolutional autoencoder with  filters of kernel size $3$ and stride $1$ for the hidden representation cannot memorize images of size $4 \times 4$ or larger, independently of the number of filters.  
\end{corollary}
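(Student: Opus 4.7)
The plan is to reduce the corollary directly to Theorem~\ref{thm:DepthRequired}. For any image size $s\ge 4$, that theorem says memorization requires at least $s-1\ge 3$ convolutional layers, which already exceeds the $2$ layers in the architecture under consideration, and the theorem's bound is explicitly independent of the number of filters per layer. So the corollary follows by substitution once Theorem~\ref{thm:DepthRequired} is in hand.

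For a self-contained argument that re-derives what is needed here without invoking the full strength of Theorem~\ref{thm:DepthRequired}, I would instead run a short receptive-field calculation and close with Lemma~\ref{lem:SingleForcedZero}. First, a single linear $3\times 3$ convolution with stride $1$ on an $s\times s$ image, written as a matrix $M$ acting on the flattened image with channels folded into the row/column indexing, satisfies $M_{(i,j),(k,\ell)} = 0$ whenever $|i-k|>1$ or $|j-\ell|>1$; adding more filters merely replicates this spatial sparsity pattern across channel blocks. Second, by the triangle inequality the composition of two such layers has its $((i,j),(k,\ell))$-entry forced to zero whenever $|i-k|>2$ or $|j-\ell|>2$. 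For $s\ge 4$ there exist index pairs witnessing this, e.g.\ $(1,1)$ and $(4,1)$, so the overall $s^2\times s^2$ operator carries at least one data-independent zero entry. Applying Lemma~\ref{lem:SingleForcedZero} then yields that the 2-layer autoencoder cannot memorize arbitrary inputs, completing the argument.

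The main obstacle, and really the only delicate point, is confirming that increasing the filter count cannot erase the spatial forced zeros. The key observation is that when we multiply the two (multi-channel) convolutional operators, the resulting matrix entry at spatial position $((i,j),(k,\ell))$ is a sum, over intermediate spatial and channel indices, of products in which each factor inherits the single-layer spatial sparsity pattern described above; a product vanishes as soon as one factor is a forced zero, so summing over channels cannot create a nonzero entry at a spatially forbidden position. Once this is noted, the rest is essentially arithmetic: $2 < s-1$ for every $s\ge 4$.
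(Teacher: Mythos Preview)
Your proposal is correct and matches the paper's approach: the corollary is presented there as an immediate consequence of Theorem~\ref{thm:DepthRequired}, exactly the substitution you make in your first paragraph. Your optional self-contained receptive-field argument via Lemma~\ref{lem:SingleForcedZero} is also sound and essentially spells out the special case of the reasoning behind Theorem~\ref{thm:DepthRequired}, which the paper does not write out separately for the corollary.
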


In Supplementary Material~F, we provide further empirical evidence that depth is also sufficient for memorization (see also Figure~\ref{fig:ConvolutionIteration}), and refine the lower bound from Theorem \ref{thm:DepthRequired} to a lower bound of $\lceil \frac{s^4}{9} \rceil$ layers needed to identify memorization in linear convolutional autoencoders.  While the number of layers needed for memorization are large according to this lower bound, in Supplementary Material~G, we show empirically that \emph{downsampling} through strided convolution allows a network to memorize with far fewer layers.  Finally, in Supplementary Material~H, we also provide evidence of how these bounds obtained in the linear setting apply to the nonlinear setting.  

\section{Robustness of Memorization}
\label{sec:RobustnessOfMemorization}

\par{\textbf{Memorization and Overfitting.}} A natural question is whether the memorization of training examples observed in the previous sections is synonymous with overfitting of the training data. In fact, we found that it is possible for a deep autoencoder to contract towards training examples even as it faithfully learns the identity function over the data distribution. The following theorem states that all training points can be memorized by a neural network while achieving an arbitrarily small expected reconstruction error.

\begin{theorem}
For any training set $\{x_i\}_{i \in [1..n]}$ and for any $\epsilon > 0$, there exists a 2-layer fully-connected autoencoder $f$ with ReLU activations and $(n+1) \cdot d$ hidden units such that (1) the expected reconstruction error loss of $f$ is less than $\epsilon$ and (2) the training examples $\{x_i\}_{i \in [1..n]}$ are attractors of the discrete dynamical system with respect to $f$.
\end{theorem}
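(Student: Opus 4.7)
The plan is to construct $f$ explicitly as the identity plus $n$ locally contractive ``dents,'' one per training example. I partition the $(n+1) \cdot d$ hidden ReLU units into a single identity block of $d$ units and $n$ dent blocks of $d$ units each, along with a linear output layer and output bias.

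\textit{Identity block.} With input weights $W^{\mathrm{id}} = I$ and biases $M\mathbf{1}$ for an $M$ exceeding the $\ell_\infty$-diameter of the support of the data distribution, every unit in this block is linear on the data support, so $\phi(x_j + M) = x_j + M$; the corresponding output weights $I$ together with an output bias of $-M\mathbf{1}$ then reproduce $x$ exactly. For each $i \in \{1,\ldots,n\}$, the dent block has input weights $W^{(i)} \in \mathbb{R}^{d \times d}$ and biases $b^{(i)}$ chosen so that $x^{(i)}$ lies strictly inside the polytope $P_i = \bigcap_{k=1}^d \{x : (W^{(i)}_k)^\top x + b^{(i)}_k > 0\}$ on which all $d$ units of the block are simultaneously active; the output weights are set so that $W_2^{(i)} W^{(i)} = -\alpha I$ for a fixed $\alpha \in (0,1)$, and the resulting constant contribution is absorbed into the output bias so that the block contributes exactly $-\alpha (x - x^{(i)})$ on $P_i$ and vanishes at $x^{(i)}$ itself. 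By shrinking the biases so each $P_i$ is a narrow polytope containing $x^{(i)}$ but none of the other training points, each dent affects only a small region of space and is zero at every $x^{(j)}$ for $j \neq i$.

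\textit{Consequences.} At each training point $x^{(k)}$ only the $k$th dent block is fully active, so $f(x^{(k)}) = x^{(k)}$ and the Jacobian equals $I - \alpha I = (1-\alpha) I$, whose spectral radius $1 - \alpha$ is strictly less than $1$; Theorem~\ref{thm:Attractor Characterization} then gives that each $x^{(k)}$ is an attractor of the discrete dynamical system induced by $f$. The reconstruction error $\|f(x) - x\|$ vanishes outside $\bigcup_i P_i$ and is bounded by $\alpha \cdot \mathrm{diam}(P_i)$ inside $P_i$, so under a mild regularity assumption on the data distribution (e.g.\ absolute continuity, so that no positive mass sits at the training points) the expected squared error $\sum_i \alpha^2 \, \mathrm{diam}(P_i)^2 \, \Pr[x \in P_i]$ can be driven below $\epsilon$ by taking the $P_i$ small enough.

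\textit{Main obstacle.} The delicate step is the joint design of the dent blocks: with only $d$ ReLUs per dent, the fully-active region is an unbounded polyhedron cut out by $d$ half-spaces, and it is not automatic that each $P_i$ can be made to contain $x^{(i)}$ yet exclude all other $x^{(j)}$, especially when $x^{(i)}$ lies deep inside the convex hull of the remaining training points. The resolution is to use the orientation freedom in $W^{(i)}$ to realize a narrow cone rooted close to $x^{(i)}$ (so that all other training examples fall outside) and to fine-tune the biases so that the partial-activation contributions of the other dent blocks at each training point cancel or are made negligibly small. Once this geometric step is carried out, the attractor property and the expected-error bound follow from the elementary linear-algebraic calculations sketched above.
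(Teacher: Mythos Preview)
Your approach is genuinely different from the paper's, and the obstacle you flag is more serious than your sketch of a resolution suggests.

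\textbf{What the paper does.} The paper's construction is \emph{coordinate-wise}: it builds $f$ so that the $j$th output coordinate depends only on the $j$th input coordinate, $f_j(x)=g_j(x_j)$, where each $g_j:[0,1]\to[0,1]$ is a one-dimensional piecewise linear function with $n+1$ breakpoints. On a window of width $2\delta$ around each training coordinate $x^{(i)}_j$ the slope is $a=(2\delta-2\epsilon)/(2\delta)<1$, and elsewhere $g_j$ stays within $\epsilon$ of the identity. Each such $g_j$ is realized by a $2$-layer ReLU net with $n+1$ hidden units, so stacking the $d$ coordinates gives $(n+1)d$ units total. The Jacobian at any training point is diagonal with all entries equal to $a<1$, and the uniform $\epsilon$-closeness to the identity gives the error bound. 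There is no convex-hull obstruction because the construction never needs to separate one training point from the others in $\mathbb{R}^d$; it only needs to separate scalar values on a line.

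\textbf{Where your construction breaks.} Your dent block $i$ outputs $W_2^{(i)}\phi(W^{(i)}x+b^{(i)})$, which vanishes at $x^{(j)}$ only if $x^{(j)}$ lies in the \emph{fully inactive} region $\{x:(W^{(i)}_k)^\top x+b^{(i)}_k\le 0\text{ for all }k\}$. For this to hold for every $j\neq i$ while $x^{(i)}$ lies in the fully active region, each of the $d$ rows of $W^{(i)}$ (with its bias) must be a hyperplane separating $x^{(i)}$ from $\{x^{(j)}\}_{j\neq i}$. If $x^{(i)}$ is in the convex hull of the others, no such hyperplane exists, so no choice of $W^{(i)},b^{(i)}$ makes the dent vanish at all other training points. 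Your proposed fix---arranging the partial-activation contributions of the other blocks to cancel at each $x^{(k)}$---would have to make both the \emph{values} and the \emph{Jacobians} of those contributions cancel (otherwise the Jacobian at $x^{(k)}$ is not $(1-\alpha)I$), and you have not argued that the available degrees of freedom suffice. Relatedly, your error bound is off: the reconstruction error does not vanish outside $\bigcup_i P_i$, because outside $P_i$ the $i$th dent is not identically zero---it is zero only on the fully inactive cone, and contributes nontrivially on the partial-activation regions, which have nonzero measure.

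\textbf{Comparison.} Your global ``identity plus dents'' picture is appealing and would give a cleaner Jacobian calculation if it went through, but with only $d$ ReLUs per dent it cannot produce a compactly supported bump, and the resulting leakage is exactly the obstruction you identify. The paper trades this geometric difficulty for a trivial one by decoupling the coordinates: each $g_j$ is a scalar function, where making a flat spot at $n$ prescribed points while staying $\epsilon$-close to the identity is elementary. If you want to rescue your construction within the $(n+1)d$ budget, the cleanest route is to adopt the same coordinate-wise decoupling rather than to chase the cancellation argument.
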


Empirical evidence of autoencoders that simultaneously exhibit memorization and achieve near-zero reconstruction error can be found in Supplementary Material~I. Additional support for the claim that memorization is not synonymous with overfitting lies in the observation that memorization occurs throughout the training process and is present even with early stopping (see Figure \ref{fig:EarlyStopping}). 

\begin{figure}[!t]
    \centering
        \begin{subfigure}[t]{0.45\textwidth}
        \centering
        \includegraphics[height=1in]{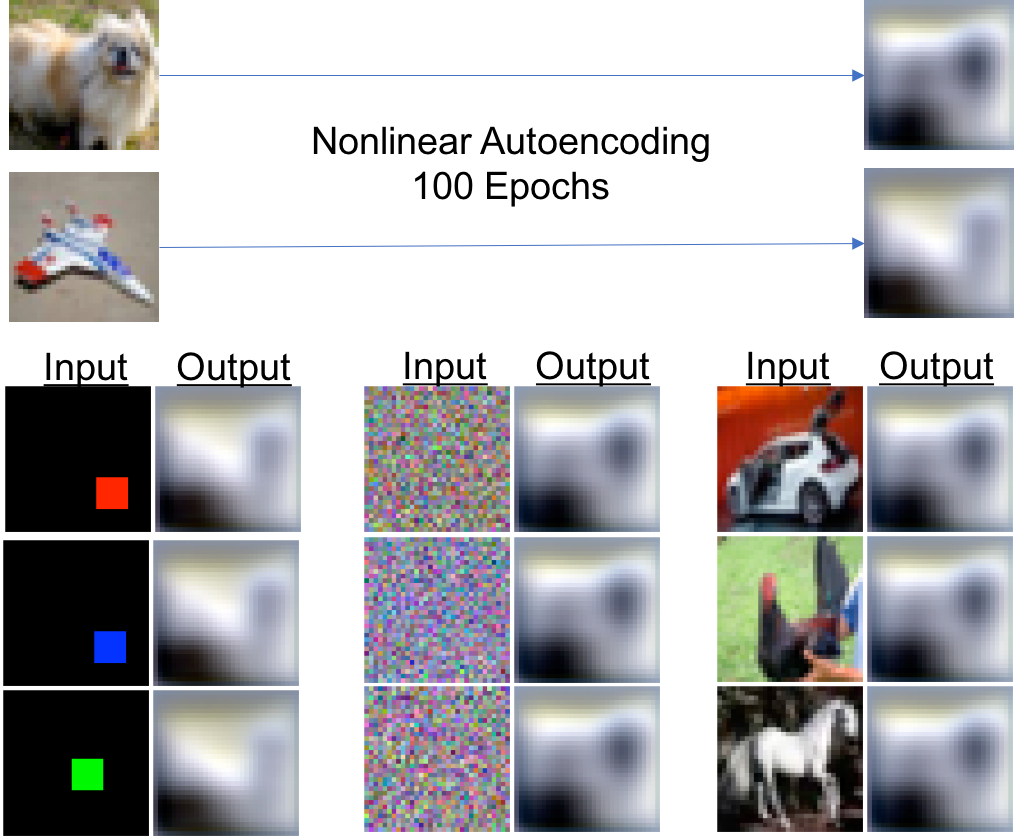}
        \caption{Nonlinear 100 training epochs.}
        \label{fig:Nonlinear100Epochs}   
    \end{subfigure}%
    ~ 
    \begin{subfigure}[t]{0.45\textwidth}
        \centering
        \includegraphics[height=1in]{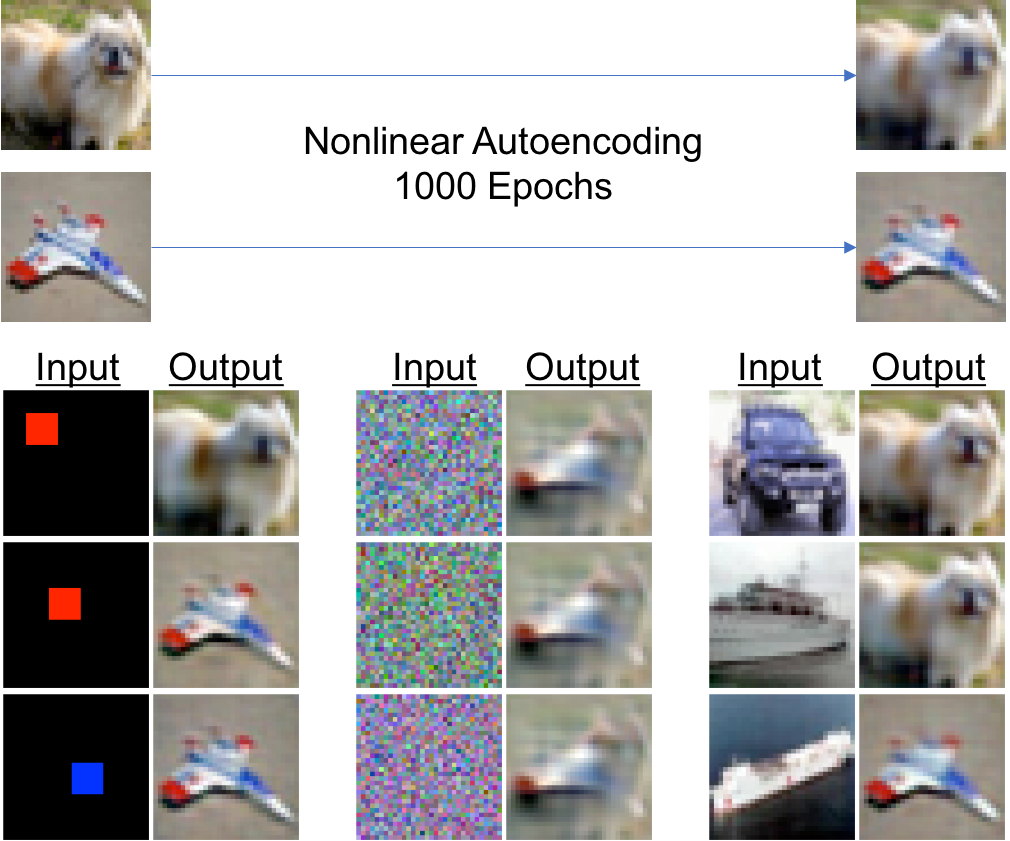}
        \caption{Nonlinear 1000 training epochs.}
        \label{fig:Nonlinear1000Epochs}
    \end{subfigure}%
\caption{The network from Figure 1a in Supplementary Material~G (modified with leaky relu nonlinearities) memorizes learned representations of training data as superattractors throughout the training process regardless of early stopping.}
    \label{fig:EarlyStopping}   
\end{figure}

\par{\textbf{Effect of Initialization.}} In Section~\ref{thm:NonlinearFCAutoencoderMemorization}, we identified memorization in fully connected autoencoders initialized at zero.  We now consider the impact of nonzero initialization.  For the linear setting, the initialization vector has a component that is orthogonal to the span of the training data.  That component is preserved throughout the training process as update vectors are contained in the span of the training data.

If the initialization is small (low variance, zero mean), then during training the initialization is subsumed by the component in the direction of the span of the training data.  However, if the initialization is large, then the solution is dominated by the random initialization vector.  Since memorization corresponds to learning a projection onto the span of the training data, using a large initialization vector leads to  noisy memorization.  This  demonstrates that while untrained autoencoders can be analyzed  using tools from random matrix theory \cite{RandomWeightAutoencoders}, the properties of the solution before and after training can be very different. In Supplementary Material~I, we provide further details regarding such a perturbation analysis in the linear setting and examples to illustrate the effect of different  initializations used in practice.

\section{Conclusions and Future Work}
\label{sec:final_sec}

In this work, we showed that overparameterized autoencoders learn maps that concentrate around the training examples, an inductive bias we defined as memorization.  While it is well-known that linear regression converges to a minimum norm solution when initialized at zero, we tied this phenomenon to memorization in nonlinear single layer fully connected autoencoders, showing that they produce output in the nonlinear span of the training examples.  We then demonstrated that nonlinear fully connected autoencoders learn maps that are contractive around the training examples, and so iterating the maps outputs individual training examples.  Lastly, we showed that with sufficient depth, convolutional autoencoders exhibit similar memorization properties.  

Interestingly, we observed that the phenomenon of memorization is pronounced in deep nonlinear autoencoders, where nearly arbitrary input images are mapped to training examples after one iteration (i.e., the training examples are superattractors).  This phenomenon is  similar to that of FastICA in Independent Component Analysis~\cite{FastICA} or more general nonlinear eigenproblems~\cite{EigsDecompFunctions}, where every  ``eigenvector" (corresponding to training examples in our setting) of certain iterative maps has its own basin of attraction.  In particular, increasing depth plays the role of increasing the number of iterations in those methods.  

The use of deep networks with near zero initialization is the current standard for image classification tasks, and we expect that our memorization results carry over to these application domains. We note that memorization is a particular form of interpolation (zero training loss) and interpolation has been demonstrated to be capable of generalizing to test data in neural networks and a range of other methods~\cite{DoubleDescent,RethinkingGeneralization}.  Our work could thus provide a mechanism to link overparameterization and memorization with generalization properties observed in deep networks.
 
\section*{Acknowledgements}
The authors thank the Simons Institute at UC Berkeley for hosting them during the program on ``Foundations of Deep Learning'', which facilitated this work. A.~Radhakrishnan, K.D.~Yang and C.~Uhler were partially supported by the National Science Foundation (DMS-1651995), Office of Naval Research (N00014-17-1-2147 and N00014-18-1-2765), IBM, and a Sloan Fellowship to C.~Uhler. K.~D.~Yang was also supported by an NSF Graduate Fellowship. M.~Belkin acknowledges support from NSF (IIS-1815697 and IIS-1631460). The Titan Xp used for this research was donated by the NVIDIA Corporation.

\bibliography{example_paper}
\bibliographystyle{plain}

\appendix

\section{Minimum Norm Solution for Linear Fully Connected Autoencoders}
\label{sec:LinearRegression}
In the following, we analyze the solution when using gradient descent to solve the autoencoding problem for the system $Ax^{(i)} = x^{(i)}$ for $1 \leq i \leq n$ with $x^{(i)} \in \mathbb{R}^d$.  The loss function is
\begin{equation*}
    L = \frac{1}{2}\displaystyle\sum\limits_{i=1}^{n}(Ax^{(i)} - x^{(i)})^T(Ax^{(i)} - x^{(i)})
\end{equation*}
and the gradient with respect to the parameters $A$ is
\begin{equation*}
    \frac{\partial L}{\partial A} = (A - I)\displaystyle\sum\limits_{i=1}^{n}(x^{(i)}x^{(i)^T}).
\end{equation*}
Let $S = \displaystyle\sum\limits_{i=1}^{n}(x^{(i)}x^{(i)^T})$.  Hence gradient descent with learning rate $\gamma > 0$ will proceed according to the equation:
\begin{equation*}
\begin{split}
    A^{(t+1)} &= A^{(t)} + \gamma (I - A^{(t)})S \\
    &= A^{(t)}(I - \gamma S) + \gamma S \\
\end{split}
\end{equation*}
Now suppose that $A^{(0)} = \mathbf{0}$, then we can directly solve the recurrence relation for $t > 0$, namely
\begin{equation*}
    A^{(t)} = I - (I - \gamma S)^t
\end{equation*}
Note that $S$ is a real symmetric matrix, and so it has eigen-decomposition $S = Q\Lambda Q^T$ where $\Lambda$ is a diagonal matrix with eigenvalue entries $\lambda_1 \geq \lambda_2 \geq \ldots \geq \lambda_r$ (where $r$ is the rank of $S$).  Then:
\begin{align*}
    A^{(t)} &= I - Q(I - \gamma \Lambda)^tQ^T \\
    &= Q (I - (I - \gamma \Lambda)^t)Q^T.
\end{align*}
Now if $\gamma < \frac{1}{\lambda_1}$, then we have that:
\begin{equation*}
\begin{split}
    A^{(\infty)} = Q\begin{bmatrix}
            I_{r \times r} & \mathbf{0}_{r \times d-r}  \\
            \mathbf{0}_{d-r \times r} & \mathbf{0}_{d - r \times d-r} \\
    \end{bmatrix}Q^T,
\end{split} 
\end{equation*}
which is the minimum norm solution.  

\section{Proof for Nonlinear Fully Connected Autoencoder}
In the following, we present the proof of Theorem 2 from the main text.  

\begin{proof}
As we are using a fully connected network, the rows of the matrix $A$ can be optimized independently during gradient descent.  Thus without loss of generality, we only consider the convergence of the first row of the matrix $A$ denoted $A_1 = [a_1, a_2, \ldots a_d]$ to find $A_1^{(\infty)}$.
The loss function for optimizing row $A_1$ is given by:
\begin{equation*}
    L = \frac{1}{2}\displaystyle\sum\limits_{i=1}^{n}(x_1^{(i)} - \phi(A_1x^{(i)}))^2.
\end{equation*}
Our proof involves using gradient descent on $L$ but with a different adaptive learning rate per example.  That is, let $\gamma_i^{(t)}$ be the learning rate for training example $i$ at iteration $t$ of gradient descent.  Without loss of generality, fix $j\in\{1,\dots ,d\}$. The gradient descent equation for parameter $a_j$ is:
\begin{equation*}
    a_j^{(t + 1)} = a_j^{(t)} + \displaystyle\sum\limits_{i=1}^{n}\gamma_i^{(t)}x_j^{(i)}\phi'(A_1^{(t)}x^{(i)})(x_j^{(i)} - \phi(A_1^{(t)}x^{(i)}))
\end{equation*}
To simplify the above equation, we make the following substitution
\begin{equation*}
    \gamma_i^{(t)} = -\frac{\gamma_i}{\phi'(A_1^{(t)}x^{(i)})},
\end{equation*}
i.e., the adaptive component of the learning rate is the reciprocal of $\phi'(A_1^{(t)}x^{(i)})$ (which is nonzero due to monotonicity conditions on $\phi$).  Note that we have included the negative sign so that if $\phi$ is monotonically decreasing on the region of gradient descent, then our learning rate will be positive.  Hence the gradient descent equation simplifies to
\begin{equation*}
    a_j^{(t + 1)} = a_j^{(t)} + \displaystyle\sum\limits_{i=1}^{n}\gamma_i x_j^{(i)}( \phi(A_1^{(t)}x^{(i)}) - x_j^{(i)}).    
\end{equation*}
Before continuing, we briefly outline the strategy for the remainder of the proof.  First, we will use assumption (c) and induction to upper bound the sequence $(\phi(A_1^{(t)}x^{(i)}) - x_j^{(i)})$ with a sequence along a line segment.  The iterative form of gradient descent along the line segment will have a simple closed form and so we will obtain a coordinate-wise upper bound on our sequence of interest $A_1^{(t)}$.  Next, we show that our upper bound given by iterations along the selected line segment is in fact a coordinate-wise least upper bound.  Then we show that $A_1^{(t)}$ is a coordinate-wise monotonically increasing function, meaning that it must converge to the least upper bound established prior.  

Without loss of generality assume, $\phi^{-1}(x_k') > 0$ for $1 \leq k \leq d$.  By assumption (c), we have that for $x \in [0, \phi^{-1}(x_j^{(i)})]$,
\begin{equation*}
    \phi(x) < -\frac{\phi(0) - x_j^{(i)}}{\phi^{-1}(x_j^{(i)})}x + \phi(0) ,
\end{equation*}
since the right hand side is just the line segment joining points $(0, \phi(0))$ and $(\phi^{-1}(x_j^{(i)}), x_j^{(i)})$, which must be above the function $\phi(x)$ if the function is strictly convex.  To simplify notation, we write
\begin{equation*}
    s_i = -\frac{\phi(0) - x_j^{(i)}}{\phi^{-1}(x_j^{(i)})}.
\end{equation*}

Now that we have established a linear upper bound on $\phi$, consider a sequence $B_1^{(t)} = [b_1^{(t)}, \ldots b_d^{(t)}] $ analogous to $A_1^{(t)}$ but with updates:
\begin{equation*}
    b_j^{(t + 1)} = b_j^{(t)} + \displaystyle\sum\limits_{i=1}^{n}\gamma_i x_j^{(i)}( -s_iB_1^{(t)}x^{(i)} + \phi(0) - x_j^{(i)})    
\end{equation*}

Now if we let $\gamma_i = \frac{\gamma}{s_i}$, then we have
\begin{equation*}
    b_j^{(t + 1)} = b_j^{(t)} - \displaystyle\sum\limits_{i=1}^{n} \gamma x_j^{(i)}(B_1^{(t)}x^{(i)} - \phi^{-1}(x_j^{(i)})),   
\end{equation*}
which is the gradient descent update equation with learning rate $\gamma$ for the first row of the parameters $B$ in solving $Bx^{(i)} = \phi^{-1}(x^{(i)})$ for $1 \leq i \leq n$.  Since gradient descent for a linear regression initialized at $0$ converges to the minimum norm solution (see Appendix A), we obtain that $B_1^{(t)}x^{(i)} \in [0, \phi^{-1}(x_1')]$ for all $t \geq 0$ when $B_1^{(0)} = \mathbf{0}$.    

Next, we wish to show that $B_j^{(t)}$ is a coordinate-wise upper bound for $A_1^{(t)}$.  To do this, we first select $L$ such that $\frac{x_j^{(i)}}{x_k^{(i)}} \leq L$ for $1 \leq i \leq n$ and $1 \leq j, k \leq d$ (i.e. $L \geq 1$).  

Then, we proceed by induction to show the following:
  \begin{enumerate}
      \item  $b_j^{(t)} > a_j^{(t)}$ for all $t \geq 2$ and $1 \leq j \leq d$.
      \item For $C_1^{(t)} = [c_1^{(t)}, \ldots c_d^{(t)}] = B_1^{(t)} - A_1^{(t)}$, $\frac{c_l^{(t)}}{c_j^{(t)}} \leq L$ for $1 \leq l, j \leq d$ and for all $t \geq 2$ .
  \end{enumerate}
  
To simplify notation, we follow induction for $a_1^{(t)}$ and $b_1^{(t)}$ and by symmetry our reasoning follows for $a_j^{(t)}$ and $b_j^{(t)}$ for $2 \leq j \leq d$.  
  
\textbf{Base Cases : } 
  \begin{enumerate}
      \item Trivially we have $a_1^{(1)} = b_1^{(1)} = 0$ and so $A_1^{(0)}x = B_1^{(0)}x = 0$.
      \item We have that:
      $a_1^{(1)} = b_1^{(1)} = \displaystyle\sum\limits_{i=1}^{n}\gamma_i x_1^{(i)}( \phi(0) - x_1^{(i)})$.  Hence we have $A_1^{(1)}x = B_1^{(1)}x$. 
      \item Now for $t = 2$, 
      \begin{equation*}
          \begin{split}
            a_1^{(2)} &= a_1^{(1)} + \displaystyle\sum\limits_{i=1}^{n}\gamma_i x_1^{(i)}( \phi(A_1^{(1)}x^{(i)}) - x_1^{(i)}) \\
            b_1^{(2)} &= b_1^{(t)} + \displaystyle\sum\limits_{i=1}^{n}\gamma_i x_1^{(i)}( -s_iB_1^{(1)}x^{(i)} + \phi(0) - x_1^{(i)}) \\
          \end{split}
      \end{equation*}
      However, we know that $B_1^{(1)}x^{(i)} \in [0, \phi^{-1}(x_1')]$ and since $A_1^{(1)} = B_1^{(1)}$, $A_1^{(1)}x^{(i)} \in [0, \phi^{-1}(x_1')]$.  Hence, $b_1^{(2)} > a_1^{(2)}$ since the on the interval $[0, \phi^{-1}(x_1')]$, $\phi$ is bounded above by the line segments with endpoints $(0, \phi(0))$ and $(\phi^{-1}(x_1^{(i)}), x_1^{(i)})$.  Now for the second component of induction, we have:
      \begin{equation*}
      \begin{split}
            c_j^{(2)} &= \displaystyle\sum\limits_{i=1}^{n}\gamma_i x_j^{(i)}( -s_iB_1^{(1)}x^{(i)} + \phi(0) - \phi(A_1^{(1)}x^{(i)})) \\
            c_l^{(2)} &= \displaystyle\sum\limits_{i=1}^{n}\gamma_i x_l^{(i)}( -s_iB_1^{(1)}x^{(i)} + \phi(0) - \phi(A_1^{(1)}x^{(i)}))
      \end{split}
      \end{equation*}
      To simplify the notation, let:
      \begin{equation*}
        G_i^{(1)} = -s_iB_1^{(1)}x^{(i)} + \phi(0) - \phi(A_1^{(1)}x^{(i)})             
      \end{equation*} 
      Thus, we have
      \begin{equation*}
      \begin{split}
            \frac{c_l^{(2)}}{c_j^{(2)}} &= \frac{\displaystyle\sum\limits_{i=1}^{n}\gamma_i x_l^{(i)} G_i^{(1)}}{\displaystyle\sum\limits_{i=1}^{n}\gamma_i x_j^{(i)} G_i^{(1)}} 
            = \frac{\displaystyle\sum\limits_{i=1}^{n}\gamma_i \frac{x_l^{(i)}}{x_j^{(i)}\displaystyle\prod\limits_{p \neq i} x_j^{(p)}} G_i^{(1)}}{\displaystyle\sum\limits_{i=1}^{n}\gamma_i \frac{1}{\displaystyle\prod\limits_{p \neq i} x_j^{(p)}} G_i^{(1)}} \\
            &\leq \frac{\displaystyle\sum\limits_{i=1}^{n}\gamma_i L \frac{1}{\displaystyle\prod\limits_{p \neq i} x_j^{(p)}} G_i^{(1)}}{\displaystyle\sum\limits_{i=1}^{n}\gamma_i \frac{1}{\displaystyle\prod\limits_{p \neq i} x_j^{(p)}} G_i^{(1)}} 
            = L \\
      \end{split}
      \end{equation*}
\end{enumerate} 

\textbf{Inductive Hypothesis: } We now assume that for $t = k$, $b_1^{(k)} > a_1^{(k)}$ and so $B_1^{(k)}x > A_1^{(k)}x$.  We also assume $\frac{c_i^{(k)}}{c_j^{(k)}} \leq L$.  

  \textbf{Inductive Step: } Now we consider $t = k+1$. Since $b_1^{(k)} = a_1^{(k)} + c_1^{(k)}$ for $c_1^{(k)} > 0$ and since $x_j^{(i)} \in (0, 1)$ for all $i, j$, we have $B_1^{(k)}x^{(i)} = A_1^{(k)}x^{(i)} + \sum\limits_{j=1}^dc_j^{(k)}x_j^{(i)}$. Consider now the difference between $b_1^{(k+1)}$ and $a_1^{(k+1)}$:
  \begin{align*}
    c_1^{(k+1)} &= c_1^{(k)} + \displaystyle\sum\limits_{i=0}^{n}\gamma_i x_1^{(i)} (-s_iB_1^{k}x^{(i)} + \phi(0) - \phi(A_1^{(k)}x^{(i)})) \\
    &= c_i^{(k)}  + \displaystyle\sum\limits_{i=0}^{n}\gamma_i x_1^{(i)} (-s_iA_1^{(k)}x^{(i)}+ \phi(0) - \phi(A_1^{(k)}x^{(i)})) \\
    & \hspace{10mm}- \displaystyle\sum\limits_{i=0}^{n}\gamma_i x_1^{(i)}s_i\sum\limits_{j=1}^dc_j^{(k)}x_j^{(i)}\\
    &> c_1^{(k)} - \displaystyle\sum\limits_{i=0}^{n}\gamma_i x_1^{(i)}s_i\sum\limits_{j=1}^dc_j^{(k)}x_j^{(i)} \\
    &\geq c_1^{(k)} - \displaystyle\sum\limits_{i=0}^{n} \gamma_i s_i \sum\limits_{j=1}^d c_j^{(k)} \\
    &= c_1^{(k)} - \displaystyle\sum\limits_{i=0}^{n} \gamma_i s_i c_1^{(k)} \sum\limits_{j=1}^d \frac{c_j^{(k)}}{c_1^{(k)}} \\
    &\geq c_i^{(k)} - \displaystyle\sum\limits_{i=0}^{n} \gamma_i s_i c_1^{(k)} L d,              
  \end{align*}
  where the first inequality comes from the fact that $-sA_1^{(k)}x^{(i)} + \phi(0)$ is a point on the line that upper bounds $\phi$ on the interval $[0, \phi^{-1}(x_1')]$, and the second inequality comes from the fact that each $x_j^{(i)} < 1$.  Hence, with a learning rate of
  \begin{equation*}
      \gamma_i = \frac{\gamma}{s_i} ~~ \text{with} ~~ \gamma < \frac{1}{nLd},
  \end{equation*}
  we obtain that $c_1^{(k+1)} = b_1^{(k+1)} - a_1^{(k+1)} > 0$ as desired.  Hence, the first component of the induction is complete.  To fully complete the induction we must show that $\frac{c_i^{(k+1)}}{c_j^{(k+1)}} \leq L$ for $1 \leq l, j \leq d$.  We proceed as we did in the base case:
  \begin{equation*}
  \begin{split}
        c_l^{(k+1)} &= c_l^{(k)} \\
        &~~~~~ + \displaystyle\sum\limits_{i=1}^{n}\gamma_i x_l^{(i)}( -s_iB_1^{(k)}x^{(i)} + \phi(0) - \phi(A_1^{(k)}x^{(i)})) \\
        c_j^{(k+1)} &= c_j^{(k)} \\
        &~~~~~ + \displaystyle\sum\limits_{i=1}^{n}\gamma_i x_j^{(i)}( -s_iB_1^{(k)}x^{(i)} + \phi(0) - \phi(A_1^{(k)}x^{(i)})).
      \end{split}
      \end{equation*}
      To simplify the notation, let
      \begin{equation*}
        G_i^{(k)} = -s_iB_1^{(k)}x^{(i)} + \phi(0) - \phi(A_1^{(k)}x^{(i)}),     
      \end{equation*} and thus
      \begin{equation*}
      \begin{split}
            \frac{c_l^{(k+1)}}{c_j^{(k+1)}} &= \frac{c_l^{(k)} + \displaystyle\sum\limits_{i=1}^{n}\gamma_i x_l^{(i)}G_i^{(k)}}{c_j^{(k)} +\displaystyle\sum\limits_{i=1}^{n}\gamma_i x_j^{(i)}G_i^{(k)}} \\
            &= \frac{\frac{c_l^{(k)}}{c_j^{(k)}\displaystyle\prod\limits_{p=1}^{d} x_j^{(p)}} +   \displaystyle\sum\limits_{i=1}^{n}\gamma_i \frac{x_l^{(i)}}{x_j^{(i)}c_j^{(k)}\displaystyle\prod\limits_{p \neq i} x_j^{(p)}}G_i^{(k)}}{\frac{1}{\displaystyle\prod\limits_{p=1}^{d} x_j^{(p)}} + \displaystyle\sum\limits_{i=1}^{n}\gamma_i \frac{1}{c_j^{(k)}\displaystyle\prod\limits_{p \neq i} x_j^{(p)}}G_i^{(k)}} \\
            &\leq \frac{\frac{L}{\displaystyle\prod\limits_{p=1}^{d} x_j^{(p)}} +   \displaystyle\sum\limits_{i=1}^{n}\gamma_i \frac{L}{c_j^{(k)}\displaystyle\prod\limits_{p \neq i} x_j^{(p)}}G_i^{(k)}}{\frac{1}{\displaystyle\prod\limits_{p=1}^{d} x_j^{(p)}} + \displaystyle\sum\limits_{i=1}^{n}\gamma_i \frac{1}{c_j^{(k)}\displaystyle\prod\limits_{p \neq i} x_j^{(p)}}G_i^{(k)}} \\
            &= L \\
      \end{split}
      \end{equation*}
This completes the induction argument and as a consequence we obtain $c_l^{(t)} > 0$  and $\frac{c_l^{(t)}}{c_j^{(t)}} \leq L$ for all integers $t \geq 2$ and for $1 \leq l, j \leq d$.  
 
Hence, the sequence $b_i^{(t)}$ is an upper bound for $a_i^{(t)}$ given learning rate $\gamma \leq \frac{1}{nLd}$. By symmetry between the rows of $A$, we have that, the solution given by solving the system $Bx^{(i)} = \phi^{-1}(x^{(i)})$ for $1 \leq i \leq n$ using gradient descent with constant learning rate is an entry-wise upper bound for the solution given by solving $\phi(Ax^{(i)}) = x^{(i)}$ for $1 \leq i \leq n$ using gradient descent with adaptive learning rate per training example when $A^{(0)} = B^{(0)} = \mathbf{0}$.

Now, since the entries of $B^{(t)}$ are bounded and since they are greater than the entries of $A^{(t)}$ for the given learning rate, it follows from the gradient update equation for $A$ that the sequence of entries of $A^{(t)}$ are monotonically increasing from $0$. Hence, if we show that the entries of $B^{(\infty)}$ are least upper bounds on the entries of $A^{(t)}$, then it follows that the entries of $A^{(t)}$ converge to the entries of $B^{(\infty)}$.

Suppose for the sake of contradiction that the least upper bound on the sequence $a_j^{(t)}$ (the $j^{th}$ entry of the first row of $A$) is a $b_j^{(\infty)} - \epsilon_j$ for $\epsilon = [\epsilon_1, \ldots \epsilon_d]$ with $\epsilon_j > 0$ for $1 \leq j \leq d$.  Then
\begin{equation*}
    \phi(A_1^{(\infty)}x^{(i)}) = \phi(B_1^{(\infty)}x^{(i)} - \epsilon x^{(i)})
\end{equation*}
for $1 \leq i \leq n$. Since we are in the overparameterized setting,  at convergence $A_1^{(\infty)}$ must give $0$ loss under the mean squared error loss and so $\phi(B_1^{(\infty)}x^{(i)} - \epsilon x^{(i)}) = x_1^{(i)}$. This implies that $B_1^{(\infty)}x^{(i)} - \epsilon x^{(i)}$ is a pre-image of $x_1^{(i)}$ under $\phi$.  However, we know that $B_1^{(\infty)}x^{(i)}$ must be the minimum norm pre-image of $x_1^{(i)}$ under $\phi$.  Hence we reach a contradiction to minimiality since $B_1^{(\infty)}x^{(i)} - \epsilon x^{(i)} < B_1^{(\infty)}x^{(i)}$ as $\epsilon x^{(i)} > 0$.  This completes the proof and so we conclude that $A^(t)$ converges to the solution given by autoencoding the linear system $Ax^{(i)} = \phi^{-1}x^{(i)}$ for $1 \leq i \leq n$ using gradient descent with constant learning rate.  
\end{proof}

\section{Contraction Around Training Examples for Large Width}
\label{appendix: ContractionInfiniteWidth}

As only the last layer is trainable and as $W_2^{(0)} = \mathbf{0}$, then by the minimum norm solution to linear regression (Supplementary Material~A), we have that after training:
\begin{align*}
    W_2^{(\infty)} = \displaystyle\sum\limits_{i=1}^{n} \frac{x^{(i)} \phi(W_1 x^{(i)} + b)^T} {||\phi(W_1 x^{(i)} + b) ||_2^2}
\end{align*}

Hence:
\begin{align*}
    f(x) = \displaystyle\sum\limits_{i=1}^{n} \frac{x^{(i)} \phi(W_1 x^{(i)} + b)^T\phi(W_1 x + b)} {||\phi(W_1 x^{(i)} + b) ||_2^2}
\end{align*}

Now entry $(m, l)$ of the Jacobian is given by:
\begin{align*}
    \frac{\partial f(x)_m}{\partial x_l} &= \displaystyle\sum\limits_{i=1}^{n} \frac{x^{(i)}_m \phi(W_1 x^{(i)} + b)^T (\phi'(W_1 x + b) \odot W_1(*, l)) } {||\phi(W_1 x^{(i)} + b) ||_2^2}
\end{align*}
where $\odot$ is element-wise multiplication and $W_1(*, l)$ is column $l$ of $W_1$.  Now by assumption (b) as $\phi(W_1 x^{(i)} + b) \perp \phi(W_1 x^{(j)} + b)$ due to nonlinearity, we have that evaluating this partial derivative at training example $x^{(j)}$ gives:
\begin{align*}
    \frac{\partial f(x)_m}{\partial x_l}|_{x^{(j)}} &=  \frac{x^{(j)}_m \phi(W_1 x^{(j)} + b)^T (\phi'(W_1 x^{(j)} + b) \odot W_1(*, l)) } {||\phi(W_1 x^{(j)} + b) ||_2^2}
\end{align*}
Now by the strong law of large numbers, $\phi(W_1x^{(j)} + b)^T(\phi'(W_1x^{(j)} + b) \odot W_1(*, l))$ converges to $rc x_l^{(j)}$ almost surely, where $r$ is the expected number of nonzero entries in $\phi'(W_1x^{(j)} + b)$ and $c$ is the second moment of entries in $W_1$ and $b$.  Similarly, by the strong law of large numbers, $||\phi(Wx^{(j)} + b)||^2_2$ converges to $rc ||x^{(j)}||^2_2 + rc$.  

Hence as the denominator of entry $(m, l)$ of the Jacobian is nonzero almost surely, we have that as width $k \rightarrow \infty$:
\begin{align*}
    \frac{\partial f(x)_m}{\partial x_l}|_{x^{(j)}} \rightarrow \frac{x_m^{(j)}x_l^{(j)}}{||x^{(j)}||^2_2 + 1}
\end{align*}

Now, the eigenvalues of the Jacobian with entries described above are precisely $\lambda_1 = \frac{||x^{(j)}||^2_2}{||x^{(j)}||^2_2 + 1}$, $\lambda_2, \ldots, \lambda_d = 0$.  To see this, we note that row $r$ of this matrix is just the row $[x_1^{(j)}, x_2^{(j)}, \ldots, x_d^{(j)}]$ multiplied by $\frac{x_r^{(j)}}{||x^{(j)}||^2_2 + 1}$.  Thus the matrix is rank $1$ as each row is a multiple of the first.  Hence, the largest eigenvalue must be the trace of the matrix, which is just $\lambda_1 = \frac{||x^{(j)}||^2_2}{||x^{(j)}||^2_2 + 1}$.  Hence as the largest eigenvalue is less than 1, the trained network is contractive at example $x^{(j)}$.  

\section{Single Layer Single Filter Convolutional Autoencoder}
\label{appendix: SingleLayerSingleFilter Convolution}
In the following, we present the proof for Theorem 3 from the main text.  
\begin{proof}
A single convolutional filter with kernel size $k$ and $\frac{k-1}{2}$ zero padding operating on an image of size $s \times s$ can be equivalently written as a matrix operating on a vectorized zero padded image of size $(s + k - 1)^2$.  Namely, if $C_1, C_2, \ldots C_{k^2}$ are the parameters of the convolutional filter, then the layer can be written as the matrix
\begin{equation*}
            \begin{bmatrix}
            R \\
            R_{r:1} \\
            \vdots \\
            R_{r:s - 1} \\
            R_{r:(s + k - 1)} \\
            \vdots \\
            R_{r:(2s + k - 2)} \\
            \vdots \\
            R_{r:((s + k - 1)(s - 1))}\\
            \vdots \\
            R_{r:((s + k)(s - 1)} \\
            \end{bmatrix}\\,
\end{equation*}
where
\begin{equation*}
        R = \begin{bmatrix}
        C_1 & \ldots & C_k & \mathbf{0}_{s - 1} & \ldots & C_{k^2 - k + 1} & \ldots & C_{k^2} & \mathbf{0}_{(s + k)(s-1)}
        \end{bmatrix} \\
\end{equation*}
and $R_{r:t}$ denotes a right rotation of $R$ by $t$ elements.

Now, training the convolutional layer to autoencode example $x$ using gradient descent is equivalent to training $R$ to fit $s^2$ examples using gradient descent.  Namely, $R$ must satisfy $Rx = x_1, Rx_{l:1} = x_2, \ldots Rx_{l:(s + k - 1)(s-1) + s-1} = x_{s^2}$ where $x_{l:t}^T$ denotes a left rotation of $x^T$ by $t$ elements.  As in the proof for Theorem 1, we can use the general form of the solution for linear regression using gradient descent from Supplementary Material A to conclude that the rank of the resulting solution will be $s^2$.      
\end{proof}
\section{Linearizing CNNs}
\label{appendix:Linearizing CNNS}
In this section, we present how to extract a matrix form for convolutional and nearest neighbor upsampling layers.  We first present how to construct a block of this matrix for a single filter in Algorithm \ref{alg:MatrixConvolution}.  To construct a matrix for multiple filters, one need only apply the provided algorithm to construct separate matrix blocks for each filter and then concatenate them.  
 We first provide an example of how to convert a single layer convolutional network with a single filter of kernel size $3$ into a single matrix for $3 \times 3$ images.

First suppose we have a $ 3 \times 3$ image $x$ as input, which is shown vectorized below: 
\begin{align*}
&\begin{bmatrix} 
    x_1 & x_2 & x_3 \\
    x_4 & x_5 & x_6 \\
    x_7 & x_8 & x_9 \\
\end{bmatrix}
\rightarrow \\
&\begin{bmatrix} 
    x_1 &
    x_2 &
    x_3 &
    x_4 &
    x_5 &
    x_6 &
    x_7 &
    x_8 &
    x_9 &
\end{bmatrix}^T
\end{align*}

Next, let the parameters below denote the filter of kernel size $3$ that will be used to autoencode the above example:
\[
\begin{bmatrix} 
    A_1 & A_2 & A_3 \\
    A_4 & A_5 & A_6 \\
    A_7 & A_8 & A_9 \\
\end{bmatrix}.
\]
We now present the matrix form $A$ for this convolutional filter such that $A$ multiplied with the vectorized version of $x$ will be equivalent to applying the convolutional filter above to the image $x$ (the general algorithm to perform this construction is presented in Algorithm \ref{alg:MatrixConvolution}).  

\[
\begin{bmatrix}
    A_5 & A_6 & 0 & A_8 & A_9 & 0 & 0 & 0 & 0 \\
    A_4 & A_5 & A_6 & A_7 & A_8 & A_9 & 0 & 0 & 0 \\
    0 & A_4 & A_5 & 0 & A_7 &  A_8 & 0 & 0 & 0 \\
    A_2 & A_3 & 0 & A_5 & A_6 & 0 & A_8 & A_9 & 0 \\
    A_1 & A_2 & A_3 & A_4 & A_5 & A_6 & A_7 & A_8 & A_9 \\
    0 & A_1 & A_2 & 0 & A_4 & A_5 & 0 & A_7 & A_8 \\    
    0 & 0 & 0 & A_2 & A_3 & 0 & A_5 & A_6 & 0 \\
    0 & 0 & 0 & A_1 & A_2 & A_3 & A_4 & A_5 & A_6 \\
    0 & 0 & 0 & 0 & A_1 & A_2 & 0 & A_4 & A_5 \\
\end{bmatrix}
\]

Importantly, this example demonstrates that the matrix corresponding to a convolutional layer has a fixed zero pattern.  It is this forced zero pattern we use to prove that depth is required for memorization in convolutional autoencoders.  

In downsampling autoencoders, we will also need to linearize the nearest neighbor upsampling operation.  We provide the general algorithm to do this in Algorithm \ref{alg:MatrixUpsampling}.  Here, we provide a simple example for an upsampling layer with scale factor 2 operating on a vectorized zero padded $1 \times 1$ image:

\[
\begin{bmatrix} 
    0  & 0 &  0  & 0 & 0  & 0 &  0  & 0 & 0\\
    0  & 0 &  0  & 0 & 0  & 0 &  0  & 0 & 0\\
    0  & 0 &  0  & 0 & 0  & 0 &  0  & 0 & 0\\
    0  & 0 &  0  & 0 & 0  & 0 &  0  & 0 & 0\\
    0  & 0 &  0  & 0 & 0  & 0 &  0  & 0 & 0\\
    0  & 0 &  0  & 0 & 1  & 0 &  0  & 0 & 0\\
    0  & 0 &  0  & 0 & 1  & 0 &  0  & 0 & 0\\
    0  & 0 &  0  & 0 & 0  & 0 &  0  & 0 & 0\\
    0  & 0 &  0  & 0 & 0  & 0 &  0  & 0 & 0\\
    0  & 0 &  0  & 0 & 1  & 0 &  0  & 0 & 0\\
    0  & 0 &  0  & 0 & 1  & 0 &  0  & 0 & 0\\
    0  & 0 &  0  & 0 & 0  & 0 &  0  & 0 & 0\\
    0  & 0 &  0  & 0 & 0  & 0 &  0  & 0 & 0\\
    0  & 0 &  0  & 0 & 0  & 0 &  0  & 0 & 0\\
    0  & 0 &  0  & 0 & 0  & 0 &  0  & 0 & 0\\
    0  & 0 &  0  & 0 & 0  & 0 &  0  & 0 & 0\\
\end{bmatrix}
\]

The resulting output is a zero padded upsampled version of the input.

\begin{algorithm*}[!bth]
    \caption{Create Matrix for Single Convolutional Filter given Input with dimensions $f \times s \times s$}
    \label{alg:MatrixConvolution}
    \begin{algorithmic}[1]
    \Require{$parameters$:= parameters of $f$ trained $3 \times 3$ CNN filters, $s$:= width and height of image without zero padding,  $f$:= depth of image, $stride$:= stride of CNN filter} 
    \Ensure{Matrix $C$ representing convolution operation}
    \Statex
    \Function{CreateFilterMatrix}{$parameters, s, f, stride$}
        \State {$paddedSize \gets s + 2$}
        \State {$resized \gets s / stride$}
        \State {$rowBlocks \gets $ zeros matrix size $(f, (paddedSize)^2)$}
        \For{$filterIndex \gets 0$ to $f - 1$}
            \For {$kernelIndex \gets 0$ to $8$}
                \State{$rowIndex \gets kernelIndex \mod{3} + paddedSize *  \floor{ \frac{kernelIndex}{3}} $}
                \State {$rowBlocks[filterIndex][rowIndex] \gets parameters[filterIndex][kernelIndex]$}
            \EndFor
        \EndFor
        \State {$C \gets $ zeros matrix of size ($(resized + 2)^2, f \cdot paddedSize^2 $)}
        \State {$index \gets resized + 2 + 1$}
        \For{$shift \gets 0$ to $resized - 1$}                    
         \State {$nextBlock \gets$ zeros matrix of size $(resized, f \cdot paddedSize^2)$}
         \State {$nextBlock[0] \gets rowBlocks$}
         \For{$rowShift \gets 1$ to $resized - 1$} 
             \State {$nextBlock[rowShift] \gets rowBlocks$ shifted right by $stride \cdot rowShift$}
         \EndFor
         \State {$C[index:index + resized, :] \gets nextBlock$}
         \State {$index \gets index + resize + 2$}
         \State {$rowBlock \gets$ zero shift $rowBlock$ by $paddedSize \cdot stride$}
        \EndFor
        \State \Return {$C$}
    \EndFunction    
    \end{algorithmic}
\end{algorithm*}

\begin{algorithm*}[!bth]
    \caption{Create Matrix for Nearest Neighbor Upsampling Layer}
    \label{alg:MatrixUpsampling}
    \begin{algorithmic}[1]
    \Require{$s$:= width and height of image without zero padding,  $f$:= depth of image, $scale$:= re-scaling factor for incoming image} 
    \Ensure{Matrix $U$ representing convolution operation}
    \Statex
    \Function{CreateUpsamplingMatrix}{$s, f, scale$}
        \State {$outputSize \gets s \cdot scale + 2$}
        \State {$U \gets$ zeros matrix of size $(f \cdot outputSize^2, f \cdot (s + 2)^2)$}
        \State {$index \gets outputSize + 1$}
        \For {$filterIndex \gets 0$ to $f - 1$}
            \For {$rowIndex \gets 1$ to $s$}
                \For {$scaleIndex \gets 0$ to $scale - 1$}
                    \For{$columnIndex \gets 0 $ to $s$}
                        \State{$row \gets $ zeros vector of size $(f(s+2)^2)$}
                        \State{$row[columnIndex + rowIndex(s + 2) +  filterIndex(s + 2)^2] \gets 1$}
                        \For{$repeatIndex \gets 0$ to $scale - 1$}
                            \State{$U[index] \gets row$}
                            \State{$index \gets index + 1$}
                        \EndFor
                    \EndFor
                    \State{$index \gets index + 2$}
                \EndFor
            \EndFor
            \State{$index \gets index + 2 \cdot outputSize$}
        \EndFor
        
        \State \Return {$U$}
    \EndFunction    
    \end{algorithmic}
\end{algorithm*}

\section{Deep Linear Convolutional Autoencoders Memorize}
\label{sec:LinearConvMemorization}


While Theorem 4 provided a lower bound on the depth required for memorization, Table~\ref{tbl:Filters-on-Memorization} shows that the depth predicted by this bound is not sufficient.  In each experiment, we trained a linear convolutional autoencoder to encode $2$ randomly sampled images of size $3 \times 3$ with a varying number of layers and filters per layer.  The first $3$ rows of Table~\ref{tbl:Filters-on-Memorization} show that the lower bound from Theorem 4 is not sufficient for memorization (regardless of overparameterization through filters) since memorization would be indicated by a rank $2$ solution (with the third eigenvalue close to zero).  In fact, the remaining rows of Table~\ref{tbl:Filters-on-Memorization} show that even $8$ layers are not sufficient for memorizing two images of size $3 \times 3$.

\begin{table}[!t]
\scriptsize
\centering
\begin{tabular}{|>{\centering}m{.5cm}|>{\centering}m{.5cm}|>{\centering}m{.8cm}|>{\centering}m{.7cm}|>{\centering}m{.7cm}|>{\centering}m{.7cm}|>{\centering\arraybackslash}m{1.4cm}|}
\hline
Image Size & \# Train. Ex. & Heuristic Lower Bound Layers & \# of Layers & \# of Filters Per Layer & \# of Params  & Spectrum \\
\hline 
$3 \times 3$ & 2 & 9 & 2 & 1 & 27  & $1, 1, .98, \ldots$ \\
\hline
$3 \times 3$ & 2 & 9 & 2 & 16 & 2592  & $1, 1, .99, \ldots$ \\
\hline
$3 \times 3$ & 2 & 9 & 2 & 128 & 149760  & $1, 1, .99,  \ldots$ \\
\Xhline{3\arrayrulewidth}
$3 \times 3$ & 2 & 9 & 5 & 1 & 45  & $1, 1, .78,  \ldots$ \\
\hline
$3 \times 3$ & 2 & 9 & 5 & 16 & 7200  & $1, 1, .72,  \ldots$ \\
\hline
$3 \times 3$ & 2 & 9 & 5 & 128 & 444672  & $1, 1, .7,  \ldots$ \\
\Xhline{3\arrayrulewidth}
$3 \times 3$ & 2 & 9 & 8 & 1 & 72  & $1, 1, .14,  \ldots$ \\
\hline
$3 \times 3$ & 2 & 9 & 8 & 16 & 14112  & $1, 1, .1,  \ldots$ \\
\hline
$3 \times 3$ & 2 & 9 & 8 & 128 & 887040 & $1, 1, .08,  \ldots$ \\
\hline
\end{tabular}
\caption{Linear convolutional autoencoders with a varying number of layers and filters were initialized close to zero and trained on $2$ normally distributed images of size $3 \times 3$. Memorization does not occur in any of the examples (memorization would be indicated by the spectrum containing two eigenvalues that are $1$ and the remaining eigenvalues being close to 0). Increasing the number of filters per layer has minimal effect on the spectrum.}  
\vspace{-0.3cm}
\label{tbl:Filters-on-Memorization}
\end{table} 

Next we provide a heuristic bound to determine the depth needed to observe memorization (denoted by ``Heuristic Lower Bound Layers'' in Tables~\ref{tbl:Filters-on-Memorization} and \ref{tbl:Single-Filter-Depth-Memorization}). Theorem 4 and Table~\ref{tbl:Filters-on-Memorization} suggest that the number of filters per layer does not have an effect on the rank of the learned solution.  We thus only consider networks with a single filter per layer with kernel size $3$.  It follows from Section 1 of the main text that overparameterized single layer fully connected autoencoders memorize training examples when initialized at $0$.  Hence, we can obtain a heuristic bound on the depth needed to observe memorization in linear convolutional autoencoders with a single filter per layer based on the number of layers needed for the network to have as many parameters as a fully connected network.  The number of parameters in a single layer fully connected linear network operating on vectorized images of size $s \times s$ is $s^4$.  Hence,  using a single filter per layer with kernel size $3$, the network needs $\lceil \frac{s^4}{9} \rceil$ layers to achieve the same number of parameters as a fully connected network.  This leads to a heuristic lower bound of $\lceil \frac{s^4}{9} \rceil$ layers for memorization in  linear convolutional autoencoders  operating on images of size $s \times s$.  

 \begin{table}[!t]
\scriptsize
\centering
\begin{tabular}{|>{\centering}m{.5cm}|>{\centering}m{0.5cm}|>{\centering}m{0.8cm}|>{\centering}m{.7cm}|>{\centering}m{.6cm}|>{\centering\arraybackslash}m{2.5cm}|}
\hline
Image Size & \# of Train. Ex. & Heuristic Lower Bound Layers & \# of Layers & \# of Params & Spectrum \\
\hline 
$2 \times 2$ & 1 & 2 & 3   & 27 &  $1, [< 10^{-2}]$ \\
\hline
$3 \times 3$ & 1 & 9  & 9   &  81 & $1, [< 10^{-2}]$ \\
\hline
$4 \times 4$ & 1 & 29  & 29   &  261  & $1, [< 10^{-3}]$ \\
\hline
$5 \times 5$ & 1 & 70 & 70 &  630  & $1, [< 10^{-5}]$ \\
\hline
$6 \times 6$  & 1 & 144 & 144*  &  1296 & $1, [< 2 \cdot 10^{-2}]$ \\
\hline
$7 \times 7$ &  1 & 267 & 267*  &  2403  & $1, [< 4 \cdot 10^{-3}]$ \\
\Xhline{3\arrayrulewidth}
$3 \times 3$ & 3 & 9 & 10  &  90  & $1, .98, .98, [< 10^{-2}]$ \\
\hline
$5 \times 5$ & 5 & 70 & 200* &  1800 & $1, 1, 1, 1, 1, [< 10^{-3}]$ \\
\hline
$7 \times 7$ & 5 & 144 & 350* & 3105 & $1, 1, 1, 1, 1, [< 10^{-2}]$ \\
\hline
\end{tabular}
\caption{Linear convolutional autoencoders with $\lceil \frac{s^4}{9} \rceil$ layers (as predicted by our heuristic lower bound) with a single filter per layer, initialized with each parameter as close to zero as possible, memorize training examples of size $s\times s$ similar to a single layer fully connected system.
The bracket notation in the spectrum indicates that the magnitude of the remaining eigenvalues in the spectrum is below the value in the brackets.  }
\label{tbl:Single-Filter-Depth-Memorization}
\vspace{-0.3cm}
\end{table}

In Table \ref{tbl:Single-Filter-Depth-Memorization}, we investigate the memorization properties of networks that are initialized with parameters as close to zero as possible with the number of layers given by our heuristic lower bound and one filter of kernel size $3$ per layer.  The first 6 rows of the table show that all networks satisfying our heuristic lower bound have memorized a single training example since the spectrum consists of a single eigenvalue that is $1$ and remaining eigenvalues with magnitude less than $\approx 10^{-2}$.   Similarly, the spectra in the last 3 rows indicate that networks satisfying our heuristic lower bound also memorize multiple training examples, thereby suggesting that our bound is relevant in practice. 

The experimental setup was as follows: All networks were trained using gradient descent with a learning rate of $10^{-1}$, until the loss became less than $10^{-6}$ (to speed up training, we used Adam \cite{Adam} with a learning rate of $10^{-4}$ when the depth of the network was greater than $10$). For large networks with over 100 layers (indicated by an asterisk in Table \ref{tbl:Single-Filter-Depth-Memorization}), we used skip connections between every 10 layers, as explained in \cite{ResNet}, to ensure that the gradients can propagate to earlier layers.  Table \ref{tbl:Single-Filter-Depth-Memorization} shows the resulting spectrum for each experiment, where the eigenvalues were sorted by there magnitudes.  The bracket notation indicates that all the remaining eigenvalues have magnitude less than the value provided in the brackets.  Interestingly, our heuristic lower bound also seems to work for deep networks that have skip connections, which are commonly used in practice.


The experiments in Table~\ref{tbl:Single-Filter-Depth-Memorization} indicate that over $200$ layers are needed for memorization of $7 \times 7$ images.  In the next section, we discuss how downsampling can be used to construct much smaller convolutional autoencoders that memorize training examples.

\section{Role of Downsampling for Memorization in Convolutional Autoencoders}
\label{sec:RoleOfDownsampling}

\begin{figure}[!t]
    \centering
    \begin{subfigure}[t]{0.44\textwidth}
        \centering
        \includegraphics[height=.7in]{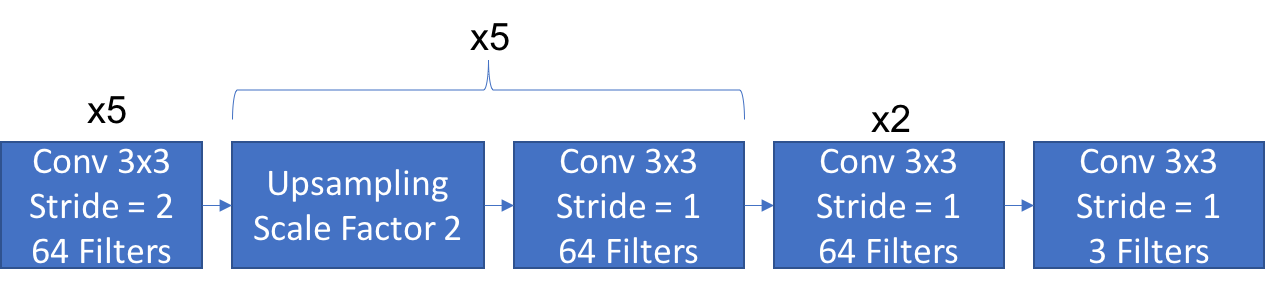}
        \vspace{-0.5cm}
        \caption{Downsampling Network Architecture D1} 
        \label{fig:D2}
        \vspace{0.5cm}
    \end{subfigure} 
    \begin{subfigure}[t]{0.48\textwidth}
        \centering
        \includegraphics[height=1.8in]{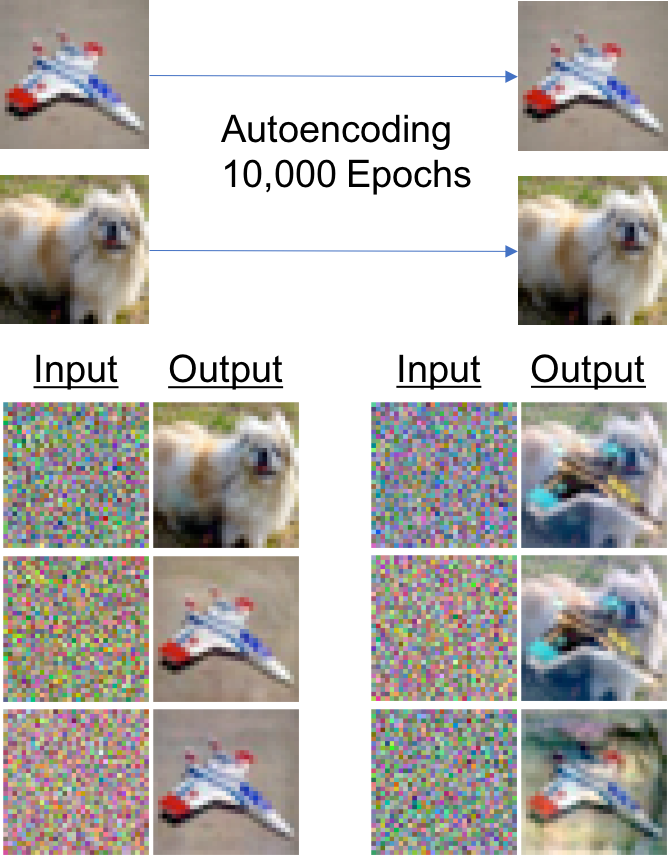}
        \caption{Network D1 trained on a CIFAR10 Dog.}
        \label{fig:D2Example}
    \end{subfigure}
    \caption{Downsampling Network Architecture D1 trained on a two images from CIFAR10 for 10000 iterations.  Each parameter of the network is initialized using the default PyTorch initialization.  When fed standard Gaussian data the model outputs a linear combination of training images since the corresponding solution has spectrum $1, 1, [< 2 \cdot 10^{-2}]$.}
    \label{full_fig}
\end{figure}
\vspace{-0.3cm}
\begin{figure}[!t]
    \centering
    \begin{subfigure}[t]{0.44\textwidth}
        \centering
        \includegraphics[height=.5in]{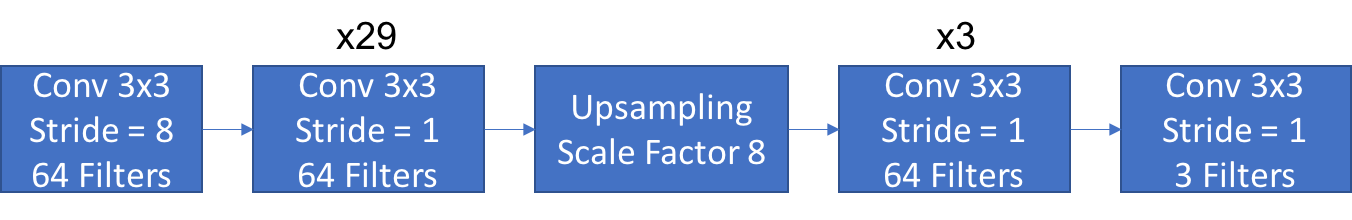}
            \vspace{-0.6cm}
        \caption{Downsampling Network Architecture D2} 
        \label{fig:D1}
        \vspace{0.4cm}
    \end{subfigure} 
    \begin{subfigure}[t]{0.48\textwidth}
        \centering
        \includegraphics[height=1.3in]{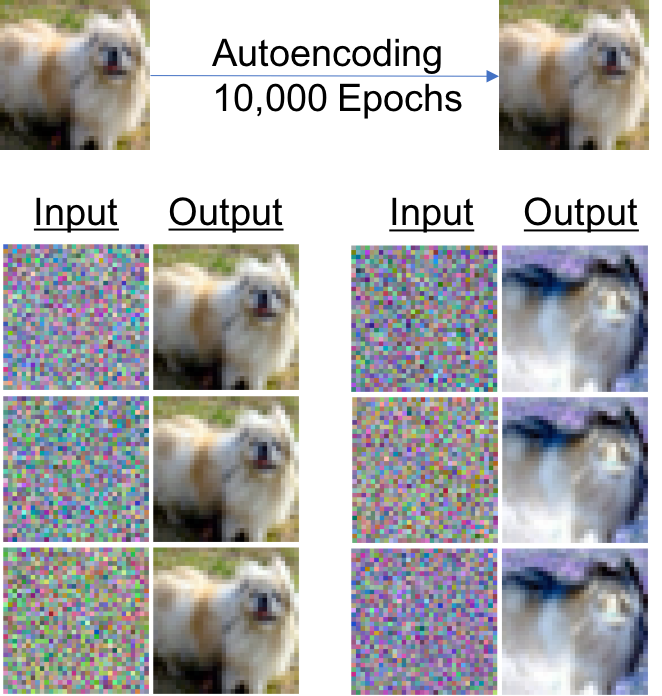}
        \caption{Network D2 trained on a CIFAR10 Dog.}
        \label{fig:D1Example}
    \end{subfigure}
    \caption{Downsampling Network Architecture D2 trained on a single image from CIFAR10 for 10000 iterations.  Each parameter of the network is initialized at $1.8\cdot 10^{-3}$.  When fed standard Gaussian data the model outputs a multiple of the training image since the corresponding solution has spectrum $1, [<7 \cdot 10^{-6}]$.}
\end{figure}


To gain intuition for why downsampling can trade off depth to achieve memorization, consider a convolutional autoencoder that downsamples input to $1 \times 1$ representations through non-unit strides. Such extreme downsampling makes a convolutional autoencoder equivalent to a fully connected network; hence given the results in Section 1 in the main text, such downsampling convolutional networks are expected to memorize. This is illustrated in Figure \ref{full_fig}: The network uses strides of size $2$ to progressively downsample to a $ 1 \times 1$ representation of a CIFAR10 input image.  Training the network on two images from CIFAR10, the rank of the learned solution is exactly $2$ with the top eigenvalues being $1$ and the corresponding eigenvectors being linear combinations of the training images.  In this case, using the default PyTorch initialization was sufficient in forcing each parameter to be close to zero.

Memorization using convolutional autoencoders is also observed with less extreme forms of downsampling.  In fact, we observed that downsampling to a smaller representation and then operating on the downsampled representation with depth provided by our heuristic bound established in Section~\ref{sec:LinearConvMemorization} also leads to memorization. As an example, consider the network in Figure \ref{fig:D1} operating on images from CIFAR10 (size $32 \times 32$).  This network downsamples a $32 \times 32$ CIFAR10 image to a $4 \times 4$ representation after layer~$1$. As suggested by our heuristic lower bound for $4\times 4$ images (see Table \ref{tbl:Single-Filter-Depth-Memorization}) we use 29 layers in the network.   Figure~\ref{fig:D1Example} indicates  that this network 
indeed memorized the image by producing a solution of rank $1$ with eigenvalue $1$ and corresponding eigenvector being the dog image.  


\section{Superattractors in Nonlinear Convolutional Autoencoders}
\label{sec:strong-mem-nonlinear}
We start by investigating whether the heuristic bound on depth needed for memorization that we have established for linear convolutional autoencoders carries over to nonlinear convolutional autoencoders.

\begin{figure}[!b]
    \centering
    \includegraphics[height=1.5in]{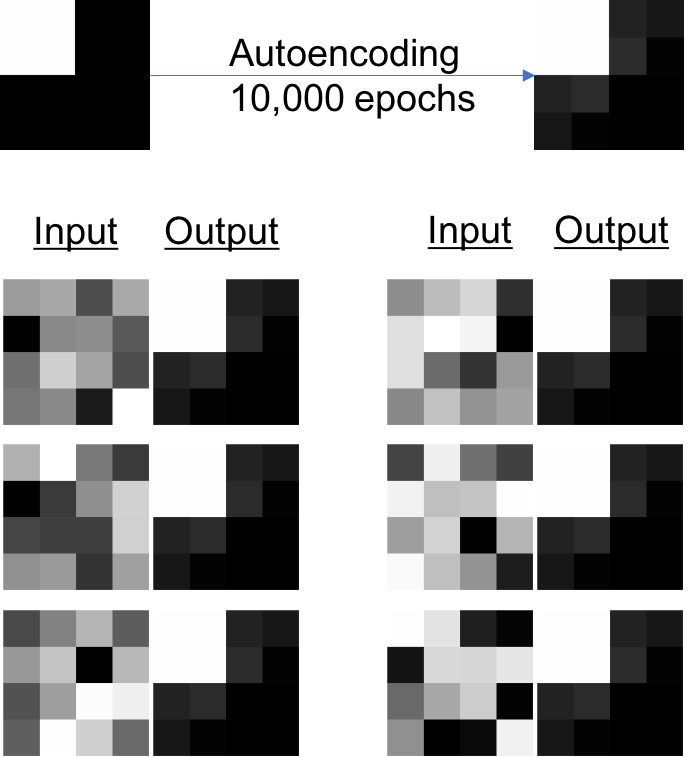}
    \caption{A $29$ layer network with a single filter of kernel size 3, 1 unit of zero padding, and stride 1 followed by a leaky ReLU activation per layer initialized with every parameter set to $10^{-1}$ memorizes $4 \times 4$ images.  Our training image consists of a white square in the upper left hand corner and the test examples contain pixels drawn from a standard normal distribution.}
    \label{fig:NonlinearNondownsampling29Layers}
    \vspace{-0.3cm}
\end{figure}

\begin{figure}[!bt]
    \centering
    \includegraphics[height=1.5in]{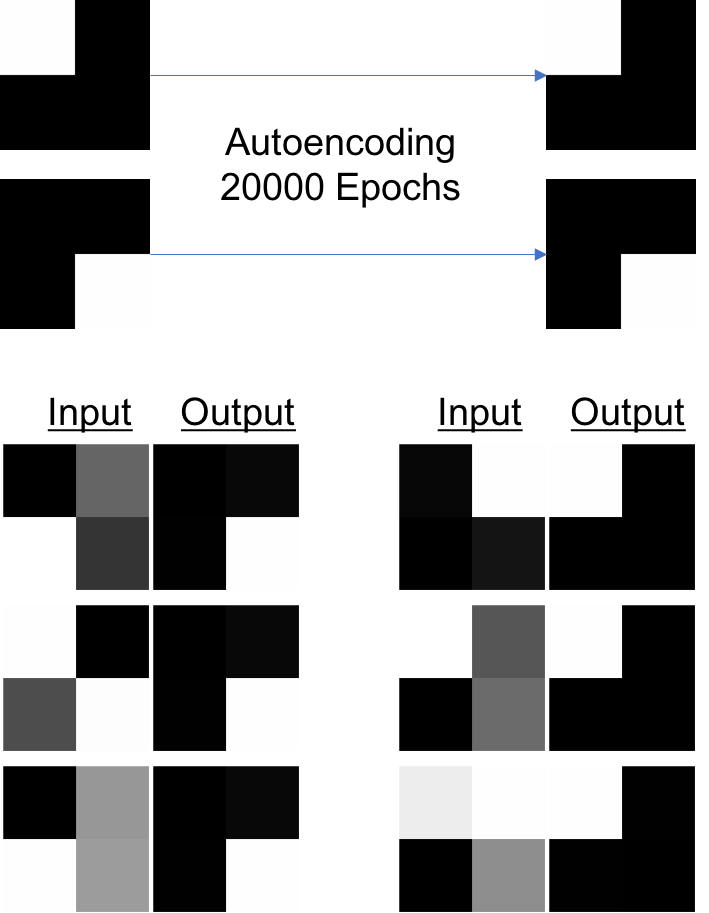}
    \caption{A $5$ layer nonlinear network strongly memorizes $2 \times 2$ images.  The network has a single filter of kernel size 3, 1 unit of zero padding, and stride 1 followed by a leaky ReLU activation per layer with Xavier Uniform initialization. The network also has skip connections between every 2 layers.  The training images are orthogonal: one with a white square in the upper left corner and one with a white square in the lower right corner. The test examples contain pixels drawn from a standard normal distribution.}
    \label{fig:NonlinearNondownsampling5Layers}
    \vspace{-0.3cm}
\end{figure}

\begin{example}
Consider a deep nonlinear convolutional autoencoder with a single filter per layer of kernel size $3$, $1$ unit of zero padding, and stride $1$ followed by a leaky ReLU \cite{LeakyReLU} activation that is initialized with parameters as close to 0 as possible.  In Table~\ref{tbl:Single-Filter-Depth-Memorization} we reported that its linear counterpart memorizes $4\times 4$ images with 29 layers. Figure \ref{fig:NonlinearNondownsampling29Layers} shows that also the corresponding nonlinear network with 29 layers can memorize $4 \times 4$ images. 
 While the spectrum can be used to prove memorization in the linear setting, since we are unable to extract a nonlinear equivalent of the spectrum for these networks, we can only provide evidence for memorization by visual inspection.  
\end{example}

This example suggests that our results on depth required for memorization in deep linear convolutional autoencoders carry over to the nonlinear setting. In fact, when training on multiple examples, we observe that memorization is of a stronger form in the nonlinear case.  Consider the example in Figure~ \ref{fig:NonlinearNondownsampling5Layers}. We see that  given new test examples, a nonlinear convolutional autoencoder with $5$ layers trained on $2 \times 2$ images outputs \emph{individual} training examples instead of combinations of training examples.

\section{Robustness of Memorization}

\subsection{Contraction and overfitting}
\label{appendix:Manifold Reconstruction}
It is possible for a deep autoencoder to contract towards training examples even as it faithfully learns the identity function over the data distribution. 
The following theorem states that all training points can be memorized by a neural network while achieving an arbitrarily small expected reconstruction error.

\begin{theorem}
For any training set $\{x_i\}_{i \in [1..n]}$ and for any $\epsilon > 0$, there exists a 2-layer fully-connected autoencoder $f$ with ReLU activations and $(n+1) \cdot d$ hidden units such that (1) the expected reconstruction error loss of $f$ is less than $\epsilon$ and (2) $\{x_i\}_{i \in [1..n]}$ are attractors of the discrete dynamical system with respect to $f$.
\end{theorem}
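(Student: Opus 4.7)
The plan is to build $f$ by hand as a sum of an exact identity term and $n$ small ``correction'' terms, one per training example, and then to tune a single scalar parameter $\alpha \in (0,1)$ so that the corrections simultaneously (i) make each training example locally contracting and (ii) perturb the reconstruction by at most $O(\alpha)$. After absorbing an affine change of coordinates into the weights, assume the data is supported on $[0,1]^d$, so that $\phi=\mathrm{ReLU}$ acts as the identity on inputs. Partition the $(n+1)d$ hidden units into one identity block of $d$ units (weights $I$, bias $0$, output weight $I$), and $n$ correction blocks of $d$ units each; write
\[ f(x) \;=\; x \;+\; \sum_{i=1}^n \sum_{j=1}^d c_{i,j}\,\phi(w_{i,j}^\top x + b_{i,j}) \;+\; \alpha b_0, \]
where $b_0\in\mathbb{R}^d$ is an auxiliary constant fixed below.

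For each training example $x_i$ I would choose the block-$i$ weights so that all $d$ of its units are strictly active at $x_i$ and strictly inactive at every other $x_k$ with $k\ne i$; this is possible whenever $x_i$ lies outside the convex hull of $\{x_k\}_{k\ne i}$, which can be arranged by an arbitrarily small perturbation of the training data. By further perturbing inside the resulting open separation cone, I can additionally require that the $d$ weight vectors $w_{i,1},\dots,w_{i,d}$ are linearly independent and that $W_i^\top b_0$ is entrywise positive, where $W_i=[w_{i,1},\dots,w_{i,d}]$. Setting the output weights $c_{i,j}$ to be the $j$-th column of $-\alpha W_i^{-\top}$ forces $\sum_j c_{i,j} w_{i,j}^\top = -\alpha I$, so that within the open polyhedral region where only block $i$ is active, $f$ is the affine map $x \mapsto (1-\alpha)x + \alpha x_i$. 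Finally I would calibrate the biases $b_{i,j}$ so that $\eta_i := W_i^\top x_i + (b_{i,1},\ldots,b_{i,d})^\top$ equals $W_i^\top b_0$; this, together with the choice of $c_{i,j}$, gives $\sum_j c_{i,j}\eta_{i,j} = -\alpha b_0$, exactly what is needed to cancel the output bias and enforce $f(x_i)=x_i$.

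With these choices, property (2) is immediate: every $x_k$ is a fixed point of $f$ by the bias calibration, and in a small neighborhood of $x_k$ only block $k$ is active, so $f$ is the affine contraction with Jacobian $(1-\alpha)I$, whose largest eigenvalue $1-\alpha$ is strictly less than $1$; by Theorem \ref{thm:Attractor Characterization}, $x_k$ is an attractor. For property (1), split the input space by activation pattern and note that on each linear piece, $f(x)-x$ equals $(M_S - I)x + v_S$ where both $M_S - I$ and $v_S$ are of order $\alpha$ (since every correction coefficient is itself $O(\alpha)$). Hence $\|f(x)-x\|\le C\alpha$ uniformly in $x$ for a constant $C$ depending only on the data support and $\|b_0\|$, yielding $\mathbb{E}\|f(x)-x\|^2 \le C^2\alpha^2$. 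Choosing $\alpha < \sqrt{\epsilon}/C$ makes the expected reconstruction error strictly less than $\epsilon$.

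The main obstacle is realizing all the constraints with only $d$ units per training example. In particular, $d$ half-spaces in $\mathbb{R}^d$ cannot bound any region, so genuine spatial localization of block $i$'s activations to a neighborhood of $x_i$ is impossible; the key observation is that this does not matter, because each correction coefficient has magnitude $O(\alpha)$, so even though block $i$ stays active across an unbounded polyhedral wedge, it perturbs $f$ away from the identity only by $O(\alpha)$. Verifying the feasibility of the separation, linear-independence, and positivity conditions reduces to the training examples being in general position, a property that can always be arranged by an infinitesimal perturbation that does not affect the expected reconstruction error.
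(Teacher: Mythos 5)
Your construction has a genuine gap at the separation step. For each $i$ you need $d$ hyperplanes, each strictly positive at $x_i$ and strictly negative at every other training point; a single such hyperplane exists only if $x_i \notin \mathrm{conv}(\{x_k\}_{k\neq i})$, i.e., only if the training set is in \emph{convex position}. This is not a general-position property and cannot be arranged by an infinitesimal perturbation once $n > d+1$: for generic data many points lie strictly inside the convex hull of the others (in the extreme case $d=1$ with $n\ge 3$, every interior point lies between the two extremes, so no half-line contains it while excluding the rest). Moreover, the proposed fix of perturbing the training set does not prove the theorem as stated: the theorem requires the \emph{given} points $\{x_i\}$ to be attractors, whereas after perturbation your $f$ fixes and contracts toward the perturbed points, and the original $x_i$ are generically not even fixed points of $f$. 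Everything downstream of the localization --- the bias calibration giving $f(x_i)=x_i$, the Jacobian $(1-\alpha)I$ on block $i$'s cell, the uniform $O(\alpha)$ deviation from the identity, and the choice $\alpha<\sqrt{\epsilon}/C$ --- is sound, but it all rests on this infeasible separation requirement.

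For comparison, the paper's proof sidesteps the issue by reducing to one dimension per coordinate: it sorts the training coordinates, builds a piecewise-linear map that has slope $1-\epsilon/\delta<1$ on a $\delta$-window around each training coordinate and slope slightly larger than $1$ in between, stays uniformly within $\epsilon$ of the identity, and then realizes each such $(n+1)$-changepoint function with $n+1$ ReLU units (hence $(n+1)\cdot d$ units in total). The 1D ordering is all that is needed, so no convex-position hypothesis ever arises. To rescue your multivariate construction you would need a localization scheme that also works for points interior to the convex hull --- e.g., more units per example to carve out a bounded cell, or a coordinate-wise reduction in the spirit of the paper.
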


\begin{proof}

Properties (1) and (2) can be achieved by $d$ piecewise linear functions with $n$ changepoints. First, let us consider the 1D setting. For simplicity, assume that the domain of $f$ is bounded, e.g. the support of the data distribution is a subset of $[0,1]$. Define $\delta = \frac{1}{4} \min_{x_i, x_j} | x_i - x_j|$. Assume that $x_1, \cdots, x_n$ are unique so that $\delta > 0$ and ordered such that $0 < x_1 < x_2 < \cdots < x_n < 1$. Finally, assume that $\epsilon \leq \delta$ (if not, we can consider a smaller $\epsilon = \delta$). Consider the function
\begin{equation*}
    f(x) = 
    \begin{cases}
      \frac{x_1-\delta+\epsilon}{x_1-\delta}x, & \text{if}\ x < x_1 - \delta \\
      ax + (1-a)(x_i - \delta) + \epsilon, & \text{if}\ x \in [x_i - \delta, x_i + \delta) \\
      b_ix + (1-b_i)(x_i + \delta) - \epsilon, & \text{if}\ x \in [x_i+\delta, x_{i+1} - \delta) \\
      \frac{1-(x_n+\delta-\epsilon)}{1-(x_n+\delta)}(x-1)+1, & \text{if}\ x \geq x_n+\delta
    \end{cases}
\end{equation*}
where $a = \frac{2\delta - 2\epsilon}{2 \delta}$ and $b_i = \frac{x_{i+1}-x_i-2\delta + 2\epsilon}{x_{i+1}-x_i-2\delta}$. Note that this function achieves expected reconstruction error less than $\epsilon$ and the absolute value of the slope at the training examples $\leq 1$, satisfying properties (1) and (2). Furthermore, this function is piecewise linear with $n+1$ changepoints, which can be represented by a 2-layer FC neural network with ReLU activations and $n+1$ hidden units (Theorem 2.2, Arora et al. 2018). To extend this to the $d$-dimensional setting, we can use this same construction to define $f(x)$ element-wise by $d$ piecewise linear functions, which can accordingly be represented by a 2-layer FC neural network with ReLU activations and $(n+1) \cdot d$ hidden units.

\end{proof}



A concrete visualization of contraction being uncoupled from manifold reconstruction is shown in Figure 1a, top right. In this example, the learned autoencoder has low reconstruction error, since it is close to the $f(x)=x$ line, yet it is contractive to the training examples as shown in Figure 1a, bottom left. To show in practice that this uncoupling can be achieved, we show in Figure \ref{fig:MNIST-recon} that it is possible to recover training examples from an MNIST autoencoder with near-zero reconstruction error. 

\begin{figure*}
\centering
\includegraphics[scale=0.2]{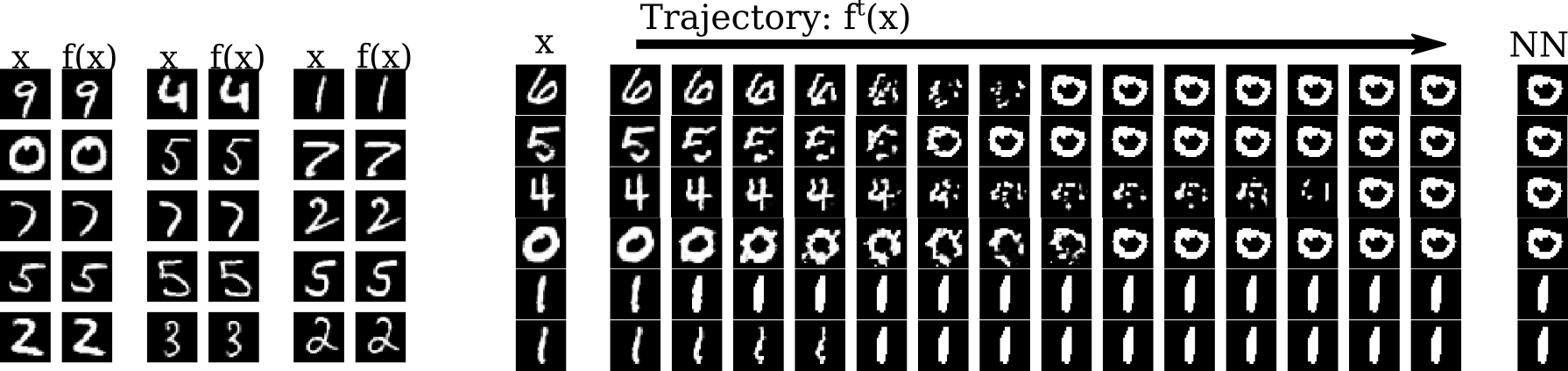}
\caption{A fully-connected autoencoder (7 layers, 512 hidden neurons) was trained to convergence on 50000 samples from the MNIST dataset and achieves near-zero reconstruction error on the test set (train MSE = 0.00117, test MSE = 0.00159). Left: Reconstructions of test images whose trajectories converge to training examples. Right: The trajectories of test examples were computed by iterating the autoencoder over the images. From left to right, the images in the trajectory represent the results after multiple iterations. }
\label{fig:MNIST-recon}
\end{figure*}

\subsection{Initialization at Zero is Necessary for Memorization}
\label{appendix:Initialization}
Section 1 in the main text showed that linear fully connected autoencoders initialized at zero memorize training examples by learning the minimum norm solution.  Since  in the linear setting the distance to the span of the training examples remains constant when minimizing the autoencoder loss regardless of the gradient descent algorithm used, non-zero initialization leads to a noisy form of memorization.  More precisely, when using non-zero initialization in the linear setting, extending Weyl's inequality to singular values yields that the singular values of the solution are bounded above by the singular values of the minimum norm solution plus the largest singular value of the initialized matrix.  Hence as long as the initialization is sufficiently small, then the solution is dominated by the singular values of the minimum norm solution and so memorization is present.  Motivated by this analysis in the linear setting, to see memorization, we require that each parameter of an autoencoder be initialized as close to zero as possible (while allowing for training).  To provide further intuition for this result, the impact of large initialization is presented in the 1D autoencoders trained in Figure 1 in the main text.  Namely in the upper left and upper right of Figure 1 in the main text, we see that the learned map with a large initialization behaves very differently from the learned map with close to zero initialization.  For example, the latter is contractive around the training examples, while the former is not.  Hence, analyses that attempt to rationalize behavior prior to training based on random matrix theory do not necessarily explain the phenomena observed after training.          

We now briefly discuss how popular initialization techniques such as Kaiming uniform/normal \cite{KaimingInit}, Xavier uniform/normal \cite{Xavier}, and default PyTorch initialization \cite{PyTorch} relate to zero initialization.  In general, we observe that Kaiming uniform/normal initialization leads to an output with a larger $\ell_2$ norm as compared to a network initialized using Xavier uniform/normal or PyTorch initializations.  Thus, we do not expect Kaiming uniform/normal initialized networks to present memorization as clearly as the other initialization schemes.  That is, for linear convolutional autoencoders, we expect these networks to converge to a solution further from the minimum nuclear norm solution and for nonlinear convolutional autoencoders, we expect these networks to produce noisy versions of the training examples when fed arbitrary inputs. This phenomenon is demonstrated experimentally in the examples in Figure~\ref{fig:Initializations}.   

\begin{figure}[!t]
    \centering
        \begin{subfigure}[t]{0.23\textwidth}
        \centering
        \includegraphics[height=.8in]{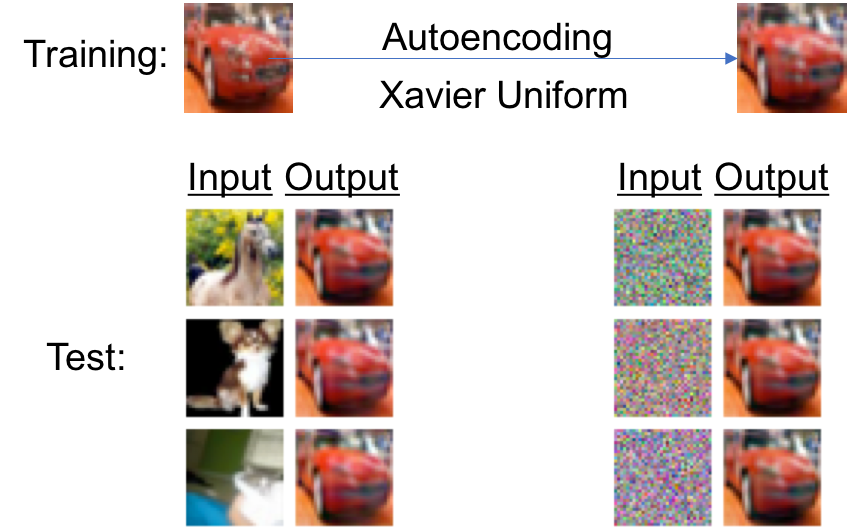}
        \caption{Output Norm: $.0087$}
        \label{fig:NonlinearXavierUniform}   
         \vspace{0.2cm}
    \end{subfigure}%
    ~ 
    \begin{subfigure}[t]{0.23\textwidth}
        \centering
        \includegraphics[height=.8in]{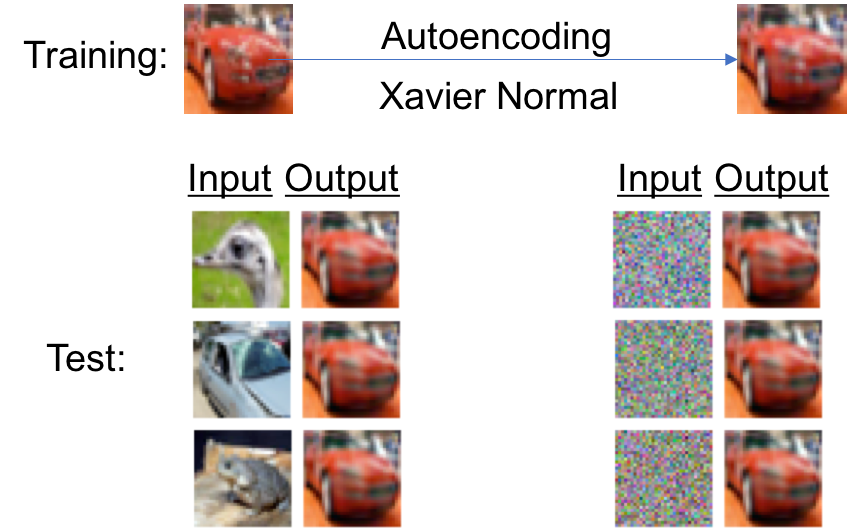}
        \caption{Output Norm: $.0079$}
        \label{fig:NonlinearXavierNormal}
         \vspace{0.2cm}
    \end{subfigure}
    ~
    \begin{subfigure}[t]{0.23\textwidth}
        \centering
        \includegraphics[height=.8in]{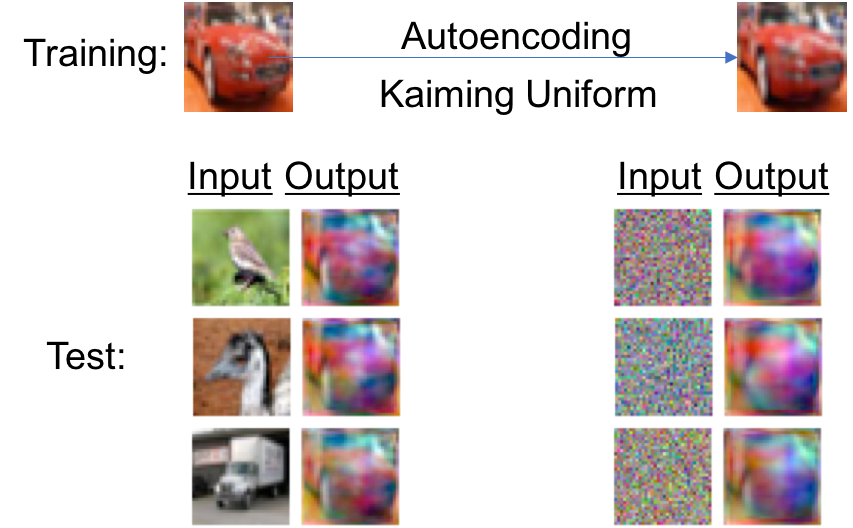}
        \caption{Output Norm: $9.67$}
        \label{fig:NonlinearKaimingUniform}
    \end{subfigure}%
    ~ 
    \begin{subfigure}[t]{0.23\textwidth}
        \centering
        \includegraphics[height=.8in]{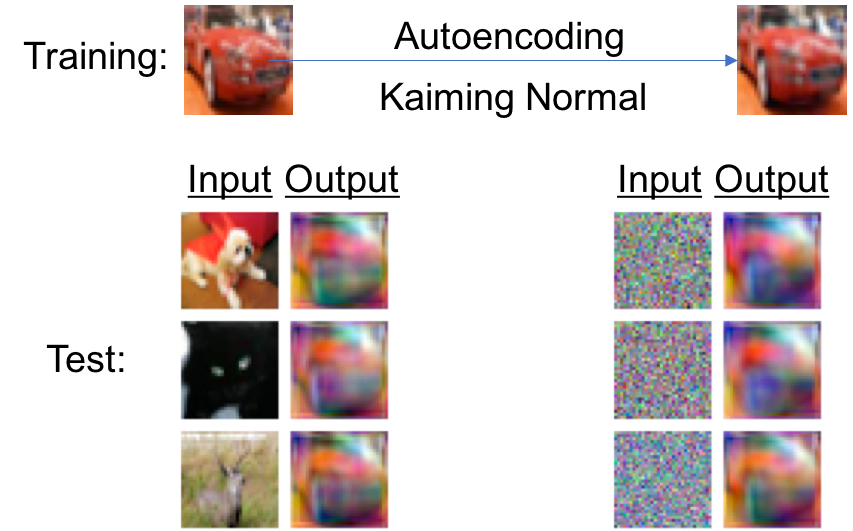}
        \caption{Output Norm: $17.64$}
        \label{fig:NonlinearKaimingNormal}
    \end{subfigure}    
    \vspace{-0.1cm}
    \caption{Effect of popular initialization strategies on memorization: Each figure demonstrates how the nonlinear version of the autoencoder from Figure 3a (modified with Leaky ReLU activations after each convolutional layer) behaves when initialized using Xavier uniform/normal and Kaiming uniform/normal strategies.  We also give the $\ell_2$ norm of the output for the training example prior to training. Consistent with our predictions, the Kaiming uniform/normal strategies have larger norms and the output for arbitrary inputs shows that memorization is noisy.}
    \label{fig:Initializations}  
    \vspace{-0.3cm}
\end{figure}

\end{document}